\def\eqref#1{equation~\ref{#1}}
\def\1{\bm{1}}
\DeclareMathAlphabet{\mathsfit}{\encodingdefault}{\sfdefault}{m}{sl}
\SetMathAlphabet{\mathsfit}{bold}{\encodingdefault}{\sfdefault}{bx}{n}
\DeclareMathOperator*{\argmax}{arg\,max}
\theoremstyle{plain}
\newtheorem{theorem}{Theorem}[section]
\newtheorem{proposition}[theorem]{Proposition}
\newtheorem{lemma}[theorem]{Lemma}
\theoremstyle{definition}
\theoremstyle{remark}
\newtheorem{remark}[theorem]{Remark}
\newcommand{\sys}{Antigone\xspace}
\newcommand{\akv}[1]{\textcolor{black}{#1}}
\newcommand{\tmlr}[1]{\textcolor{black}{#1}}
\title{Hyper-parameter Tuning for Fair Classification without Sensitive Attribute Access}
\author{\name Akshaj Kumar Veldanda \email akv275@nyu.edu \\
      \addr Electrical and Computer Engineering Department \\
      New York University
      \AND
      \name Ivan Brugere\thanks{Equal Contribution} \email ivan.brugere@jpmchase.com \\
      \addr JP Morgan Chase AI Research
      \AND
      \name Sanghamitra Dutta\footnotemark[1] \email sanghamd@umd.edu \\
      \addr Electrical and Computer Engineering Department \\ 
      University of Maryland College Park
      \AND
      \name Alan Mishler\footnotemark[1] \email alan.mishler@jpmchase.com \\
      \addr JP Morgan Chase AI Research
      \AND
      \name Siddharth Garg \email sg175@nyu.edu \\
      \addr Electrical and Computer Engineering Department \\
      New York University}
\begin{document}

\maketitle

\begin{abstract}
Fair machine learning methods seek to train models that balance model performance across demographic subgroups defined over sensitive attributes like race and gender.
Although sensitive attributes are typically assumed to be known during training, they may not be available in practice due to privacy and other logistical concerns. Recent work has sought to train fair models without sensitive attributes on training data. However, these methods need extensive hyper-parameter tuning to achieve good results, and hence assume that sensitive attributes are known on validation data.  
However, this assumption too might not be practical. 
Here, we propose \sys, a framework to train fair classifiers without access to sensitive attributes on either training or validation data. 
Instead, we generate pseudo sensitive attributes on the validation data by training a ERM model and using the classifier's 
incorrectly (correctly) classified examples as proxies for disadvantaged (advantaged) groups. Since fairness metrics like demographic parity, equal opportunity and subgroup accuracy can be estimated to within a proportionality constant even with noisy sensitive attribute information, we show theoretically and empirically that these proxy labels can be used to maximize fairness under average accuracy constraints. Key to our results is a principled approach to select the hyper-parameters of the ERM model in a completely unsupervised fashion (meaning without access to ground truth sensitive attributes) that minimizes the gap between fairness estimated using noisy versus ground-truth sensitive labels. \akv{We demonstrate that \sys outperforms existing methods on CelebA, Waterbirds, and UCI datasets.}
\end{abstract}

\section{Introduction}
\label{sec:intro}

Despite their success on a range of real-world tasks, 
prior work~\citep{unintended1, unintended2, cvardro} has found that state-of-the-art deep neural networks exhibit unintended biases towards specific subgroups, for instance towards disadvantaged subgroups or because of 
tendency to learn spurious correlations and simplicity bias, especially harming disadvantaged groups. Seminal work by~\cite{fr1} demonstrated, for instance, that commercial face recognition systems had lower accuracy on darker-skinned women than other groups.
A body of work has sought to design fair machine learning algorithms that account for a model's performance on a per-group basis~\citep{mindiff, jtt, george}. 

\begin{figure*}[htbp]%
\centering
  \includegraphics[width=\textwidth]{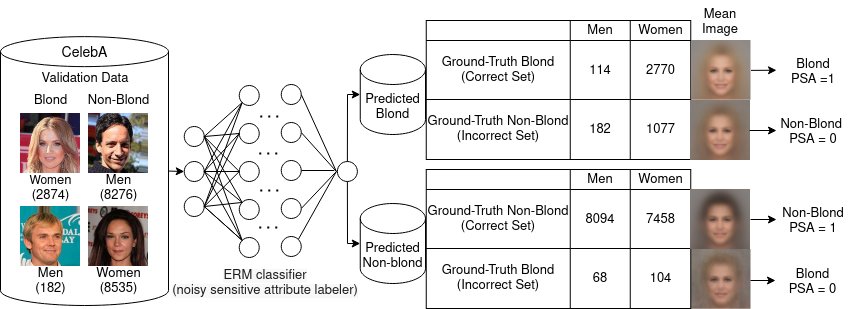}
  \caption{\sys on CelebA dataset with hair color as target label and gender as (unknown) sensitive attribute. Blond men are discriminated against. Correspondingly, the mean image of the Blond class incorrect set (row 4) has more male features than that of its correct set (row 1), reflecting this bias. Similarly, a bias against non-blond women is also reflected. PSA $= 0$ corresponds to disadvantaged groups, and PSA $= 1$ corresponds to advantaged groups.}
  \label{fig:antigone}
\end{figure*}

Prior work typically assumes that attributes, \emph{e.g.} gender and race, on which we seek to train fair models are available on training and validation data~\citep{gdro, mindiff}. We will refer to these as \emph{sensitive attributes} (SA). 
However, recent work~\citep{no_sensitive_attr1, no_sensitive_attr2} has 
highlighted many real-world settings in which SA may not be available. 
For example, data subjects may abstain from providing sensitive information for privacy reasons or to evade future discrimination~\citep{eschew1}. Attributes on which the model discriminates might not be known or available during training and only identified post-deployment~\citep{attrunkn1, attrunkn2, emily}.
For instance, recent work shows that fair NLP models trained on western datasets discriminate based on last names when
re-contextualized to geo-cultural settings like India~\citep{nlp_india}. Similarly, reports suggest that Nikon's face detection models repeatedly identify Asian faces as blinking, a bias that was only identified retrospectively~\citep{nikon}. 
Unfortunately, by this point, at least some harm is already incurred.
 
Recent work has therefore sought to train fair classifiers \textit{without} SA on the training set~\citep{jtt,eiil,lff,cvardro}. 
At their core, these methods first identify advantaged and disadvantaged groups within the training dataset; i.e., subgroups with 
high and low accuracy, respectively.
Then, they deploy various training strategies to boost performance on disadvantaged groups.  
However, these methods are \emph{highly} sensitive to the choice of hyperparameters, and can actually \emph{hurt} fairness compared to 
baseline empirical risk minimization (ERM) without proper hyperparameter tuning~\citep{jtt}.
Hence, with the exception of GEORGE~\citep{george} and ARL~\citep{arl}, prior work has assumed 
access to SA on the validation data. 
However, in practical settings, SA may not be available for the same reasons they are unavailable on training data.

In this paper, we propose \sys, a principled approach that enables hyperparameter tuning for fairness without access to SA on training \emph{and} validation data. \sys enables effective hyperparameter tuning of methods like JTT~\citep{jtt}, \tmlr{AFR~\citep{afr}} (that currently assume ground-truth SAs on validation data) and improves fairness when GEORGE's and ARL's own hyperparameter tuning methods are replaced with \sys. 
\sys works for fairness metrics including demographic parity, equal opportunity and worst sub-group accuracy.

\sys starts with a baseline ERM model trained to predict target labels (note these target labels are known and different from SAs).  
Subgroups advantaged by the baseline ERM model will be over-represented in the set of correctly classified inputs; similarly, disadvantaged subgroups are over-represented in the set of incorrectly classified inputs. 
Hence, \sys uses the ERM model's 
correctly and incorrectly classified validation data as proxies for advantaged and disadvantaged subgroups, respectively. We refer to these as
pseudo-sensitive attributes (PSA). Note that PSA labels are \emph{noisy}; some inputs from the advantaged subgroups will be incorrectly classified, and vice-versa. Can they still be used for hyperparameter tuning?
Prior work shows that under certain theoretical assumptions on label noise, called the 
mutually corrupted (MC) noise model, fairness measured on PSA is proportional to ground-truth fairness~\citep{fairness_mc_noise}. Thus, PSA labels on validation data can still be used to compare different models for fairness, and consequently for hyperparameter tuning.

However, this sets up a new problem: how do we tune the hyperparameters of the ERM model itself to maximize the accuracy (or minimize the noise) of our PSA? We \emph{cannot} directly measure PSA accuracy or noise since we do not have any ground-truth SA labels. Here, we show formally that under the MC noise model, minimizing noise is equivalent to maximizing the Euclidean distance between the mean (EDM) images in the correct and incorrect classes. 
Since the EDM \emph{can} be directly measured, we train a family of ERM models with different hyper-parameters and pick the model with the largest EDM on the validation dataset. The PSAs obtained are then used to tune hyperparameters for fairness schemes like JTT, \tmlr{AFR,} GEORGE and ARL. 

{For more intuition, consider the example in~\autoref{fig:antigone} on the CelebA dataset. The target label is hair color, and the (unknown) SA is gender. The baseline ERM model discriminates against blond men. The correct set for the ground-truth blond class has only $4\%$ blond men while the incorrect set has $65\%$ blond men. This is also reflected in the 
mean images for these two classes: the 
correct set for the ground-truth blond individuals has more female features while the incorrect set has more male features. As noted above, \sys 
picks an ERM model with the largest distance between these mean images so as to maximize PSA label accuracy.

\akv{We evaluate \sys in conjunction with three state-of-art methods, JTT~\citep{jtt}, \tmlr{AFR~\citep{afr},} GEORGE~\citep{george} and ARL~\citep{arl}, on binary SA using demographic parity, equal opportunity, and worst subgroup accuracy as fairness metrics across the CelebA, Waterbirds and Adult datasets.} Empirically, we find that: (1) \sys produces more accurate PSA labels on validation data compared to GEORGE's unsupervised clustering approach (\autoref{tab:f1_accuracy}); (2) used with JTT \tmlr{(AFR)}, \sys comes close to matching the fairness of JTT \tmlr{(AFR)} tuned with ground-truth SA as shown in \autoref{tab:celeba-jtt} \tmlr{(\autoref{tab:afr})}; and (3) improves the fairness of both GEORGE and ARL when \sys's PSA labels are used instead of their own hyperparameter tuning methods (\autoref{tab:george},~\autoref{tab:arl_uci_main}). Specifically, the worst-group accuracy increases by 4.2\% and 8.6\% on the CelebA and Waterbirds datasets for GEORGE, respectively, and by up to 11.6\% on the Adult dataset for ARL.

Ablation and sensitivity studies demonstrate the effectiveness of \sys's EDM metric versus alternatives (\autoref{tab:f1_accuracy}) and shed light on discrepancies between the ideal MC assumptions and its use in practice (Appendix~\autoref{tab:ideal_mc_model_jtt}). Overall, our key contributions are:

\begin{itemize}
    \item We propose \sys (\autoref{sec:algo}), a new method for the often overlooked problem of hyperparameter tuning for fair classification in setting where sensitive attributes are unavailable on both training and validation data. \sys generates PSA labels on validation data
    using the correctly and incorrectly classified examples of an ERM model as proxies for advantaged and disadvantaged subgroups.
    \item We propose an unsupervised approach to tune \sys's own hyperparameters (specifically, those of its ERM model) by maximizing the Euclidean distance between the mean (EDM) images of the correctly and incorrectly classified sets. We theoretically justify this choice under the MC noise model, proving that maximizing EDM minimizes PSA label noise in an idealized setting (\autoref{sec:edm_under_mn_noise}). Empirically, we show that gap between our practical implementation and the idealized MC noise model is small.
    \item Experimentally, we find that \sys based hyperparameter tuning boosts fairness for three state-of-art methods, JTT, \tmlr{AFR, }GEORGE and ARL (\autoref{sec:exp_setup}), even surpassing their own hyperparameter tuning techniques. \sys also generates more accurate PSA labels compared to GEORGE's unsupervised clustering approach (\autoref{sec: results}).
\end{itemize}

\section{Proposed Methodology}
We now describe \sys, starting with the problem formulation (Section~\ref{sec:problem}) followed by a description of the \sys algorithm (Section~\ref{sec:algo}).

\subsection{Problem Setup}\label{sec:problem}
Consider a data distribution over set $\mathcal{D} = \mathcal{X} \times \mathcal{A} \times \mathcal{Y}$, the product of input data ($\mathcal{X}$), sensitive attributes ($\mathcal{A}$) and target labels ($\mathcal{Y}$) triplets. We are given a training set $D^{tr} = \{x^{tr}_{i}, a^{tr}_{i}, y^{tr}_{i}\}_{i=1}^{N^{tr}}$ with $N^{tr}$ training samples, and a validation set $D^{val} = \{x^{val}_{i}, a^{val}_{i}, y^{val}_{i}\}_{i=1}^{N^{val}}$ with $N^{val}$ validation samples.
We will assume binary sensitive attributes ($\mathcal{A} \in \{0,1\}$) and target labels ($\mathcal{Y} \in \{0,1\}$).

We seek to train a machine learning model, say a deep neural network (DNN), which can be represented as a parameterized function $f_{\theta}: \mathcal{X} \rightarrow \mathcal{Y} \in \{0, 1\}$, where $\theta \in \Theta$ are the trainable parameters, e.g., DNN weights and biases.
Standard fairness unaware empirical risk minimization (ERM) optimizes over trainable parameters $\theta$ to minimize average binary cross-entropy loss on $D^{tr}$. Optimized model parameters $\theta^{*}$ are obtained by invoking a training algorithm, for instance stochastic gradient descent (SGD), on the training dataset and model, i.e.,  $\theta^{*,\gamma} = \mathcal{M}^{ERM}(D^{tr}, f_{\theta}, \gamma)$, where $\gamma \in \Gamma$ are hyper-parameters of the training algorithm including learning rate, training epochs etc. Hyper-parameters are tuned by evaluating models $f_{\theta^{*,\gamma}}$ for all $\gamma \in \Gamma$ on $D^{val}$ and picking the best model. More sophisticated algorithms like Bayesian optimization can also be used. Next, we review three commonly used fairness metrics that we account for in this paper.

\paragraph{Demographic parity (DP):} DP requires the model's outcomes to be independent of sensitive attribute. In practice, we seek to minimize the demographic parity gap: 
\begin{equation}
    \Delta^{DP}_{\theta} = \mathbb{P}[f_{\theta}(X) = 1 | A = 1]-\mathbb{P}[f_{\theta}(X) = 1 | A = 0]
\end{equation}

\paragraph{Equal opportunity (EO):} EO aims to equalize only the model's true positive rates across sensitive attributes. In practice, we seek to minimize
\begin{equation}
\begin{split}    
    \Delta^{EO}_{\theta} = \mathbb{P}[f_{\theta}(X) = 1 | A = 1, Y = 1] - \mathbb{P}[f_{\theta}(X) = 1 | A = 0, Y = 1]
\end{split}
\end{equation}

\paragraph{Worst-group accuracy (WGA):} WGA seeks to maximize the minimum accuracy over all sub-groups (over sensitive attributes and target labels).
 That is, we seek to maximize:
\begin{equation}
    WGA_{\theta} = \min_{a \in \{0,1\}, y \in \{0,1\} } \mathbb{P}[f(x)=y | A=a, Y=y]
\end{equation}

In all three settings, we seek to train models that optimize fairness under a constraint on average \textit{target label accuracy}, \textit{i.e.,} accuracy in predicting the target label.
For example, for equal opportunity, we seek $\theta^{*} =\arg\min_{\theta \in \Theta} \Delta^{EO}_{\theta}$ such that $\mathbb{P}[f_{\theta}(x)=Y] \in [Acc^{thr}_{lower}, Acc^{thr}_{upper})$, where $Acc^{thr}_{lower}$ and $Acc^{thr}_{upper}$ are user-specified lower and upper bounds on target label accuracies, respectively.

\subsection{\sys Algorithm}
\label{sec:algo}

We now describe the \sys algorithm which consists of three main steps. 
In step 1, we train multiple ERM models that each provide pseudo sensitive attribute (PSA) labels on validation data. 
In step 2, we use the proposed EDM metric to pick 
a single ERM model from step 1 with the most accurate PSA labels. Finally, in step 3, we use the PSA labelled validation set from step 2 to tune the hyper-parameters of methods like JTT that train fair classifiers without SA labels on training data.

\paragraph{Step 1: Generating PSA labels on validation set.} 
In step 1, we use the training dataset and standard ERM training to obtain a set
of classifiers, 
$\theta^{*,\gamma} = \mathcal{M}^{ERM}(D^{tr}, f_{\theta},\gamma)$, each corresponding to a different value of training hyper-parameters $\gamma \in \Gamma$.
As we discuss in Section~\ref{sec:problem}, these include learning rate, weight decay and number of training epochs. 
Each classifier, which predicts the target label for a given input, generates a validation set with PSA labels as follows:

\begin{equation}
D^{val,\gamma}_{\text{PSA}} = \{x^{val}_{i}, a^{val,\gamma}_{i}, y^{val}_{i}\}_{i=1}^{N^{val}} \quad \forall \gamma \in \Gamma, \text{where}
\end{equation}
\begin{equation}
    a^{val,\gamma}_{i} =
    \begin{cases}
      1, & \text{if}\ f_{\theta^{*,\gamma}}(x^{val}_{i})=y^{val}_{i} \\
      0, & \text{otherwise}.
    \end{cases}
\end{equation}
where $a^{val,\gamma}_{i}$ now refers to PSA labels.
In the next step, we search over the set $\Gamma$ to find hyperparameters 
$\gamma \in \Gamma$ that maximize PSA accuracy.

\paragraph{Step 2: Picking the most accurate PSA labeller.} 
From Step 1, let the correct set 
be $X^{val,\gamma}_{A=1,\text{PSA}} = \{x^{val}_{i}: a^{val,\gamma}_{i}=1\}$ and the incorrect set 
be $X^{val,\gamma}_{A=0,\text{PSA}} = \{x^{val}_{i}: a^{val,\gamma}_{i}=0\}$. We define Euclidean distance between the means (EDM) of these sets as: 
\begin{equation}
    EDM^{\gamma} = \|\mu(X^{val,\gamma}_{A=1,\text{PSA}})-\mu(X^{val,\gamma}_{A=0,\text{PSA}})\|_{2},
\end{equation}
where $\mu(.)$ represents the empirical mean of a dataset.
\sys picks $\gamma^{*}$ that maximizes EDM, i.e.,  $\gamma^{*} = \argmax_{\gamma \in \Gamma} EDM^{\gamma}$. 
We justify this choice in two ways.
Intuitively, PSA labels on validation data {distinguish} between advantaged and disadvantaged classes, e.g., placing blond men and blond women in different groups as in \autoref{fig:antigone}, resulting in larger differences in the mean images of the two groups. 
Formally, we show the optimality of this strategy under the MC noise model in Subsection~\ref{sec:edm_under_mn_noise}. 

\paragraph{Step 3: Training a fair model.} Step 2 yields $D^{val,\gamma^{*}}_{\text{PSA}}$, a validation dataset with (estimated) pseudo sensitive attribute labels. We can provide $D^{val,\gamma^{*}}_{\text{PSA}}$ as an input to any method that trains fair models without access to SA on training data, but requires a validation set with SA labels to tune its own hyper-parameters. In our experimental results, we use $D^{val,\gamma^{*}}_{\text{PSA}}$ to tune the hyper-parameters of JTT~\citep{jtt}, GEORGE~\citep{george} and ARL~\citep{arl}.

\subsection{Analyzing \sys under Ideal MC Noise}
\label{sec:edm_under_mn_noise}
Prior theoretical work~\citep{fairness_mc_noise} has studied the impact of noisy sensitive attribute labels on fairness under the ``mutually contaminated" (MC) noise model~\citep{mc_noise}. 
Here, it is assumed that we have access to PSA labels, ${X}_{A=0,\text{PSA}} \in \mathcal{X}$ and ${X}_{A=1,\text{PSA}}  \in \mathcal{X}$, corresponding to disadvantaged (PSA $=0$) and advantaged (PSA $=1$) groups, respectively, that 
are contaminated (or, noisy) versions of their corresponding
ground-truth SA labels, ${X}_{A=0}  \in \mathcal{X}$ and ${X}_{A=1} \in \mathcal{X}$\tmlr{. Specifically, $P(X|\text{PSA}=1) = (1 - \alpha) P(X | A=1) + \alpha P(X | A=0)$ and $P(X|\text{PSA}=0) = \beta P(X | A=1) + (1-\beta) P(X | A=0)$ decompositions are satisfied under the MC noise model, but we follow the notation described in prior work~\citep{mc_noise, fairness_mc_noise} to denote the MC noise model in our setting as:}
\begin{equation}
  \label{eq:mc_model}
  \begin{array}{l}
    {X}_{A=1,\text{PSA}} = (1 - \alpha) {X}_{A=1} + \alpha {X}_{A=0} \text{ and } {X}_{A=0,\text{PSA}} = \beta {X}_{A=1} + (1 - \beta) {X}_{A=0}
  \end{array}
\end{equation}
where $\alpha$ and $\beta$ are noise parameters. 
With some abuse of terminology for conciseness, Equation~\ref{eq:mc_model} says that fraction $\alpha$ of the pseudo advantaged group,  
${X}_{A=1, \text{PSA}}$, is contaminated with data from the disadvantaged group, and fraction $\beta$ of the pseudo disadvantaged group,  
${X}_{A=0, \text{PSA}}$, is contaminated with data from the advantaged group. 
We construct ${D}_{A=0, \text{PSA}}$ by appending input instances in ${X}_{A=0, \text{PSA}}$ with their corresponding PSA labels (\textit{i.e.,} $a_{i}=0$) and target labels, respectively. We do the same for ${D}_{A=1, \text{PSA}}$. The noise can also be target dependent, in which case 
we use $\alpha_{i}$ and $\beta_{i}$ as noise parameters for label $i$.
Under this model, ~\cite{fairness_mc_noise} show the following result:

\begin{proposition}~\citep{fairness_mc_noise}
Under the ideal MC noise model in~\autoref{eq:mc_model}, DP and EO gaps measured on the noisy datasets are proportional to the true DP and EO gaps. Mathematically:
\begin{equation}
  \label{eq:dp_noise}
  \Delta^{DP}({D}_{A=0,\text{PSA}} \cup D_{A=1,\text{PSA}}) = (1 - \alpha - \beta) \Delta^{DP}({D_{A=0} \cup D_{A=1}}), \text{and}
\end{equation} 
\begin{equation}
  \label{eq:eo_noise}
  \Delta^{EO}({D}_{A=0,\text{PSA}} \cup D_{A=1,\text{PSA}}) = (1 - \alpha_{1} - \beta_{1}) \Delta^{EO}({D_{A=0} \cup D_{A=1}}).
\end{equation} 
\end{proposition}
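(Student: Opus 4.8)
The plan is to reduce the statement to a one-line linear computation by exploiting the fact that the MC noise model makes each noisy sensitive group an explicit \emph{mixture} of the two ground-truth sensitive groups. The only probabilistic fact needed is linearity of expectation under mixing: if $X$ is drawn from the mixture $\lambda Q + (1-\lambda) R$, then for the event $\{f_{\theta}(X) = 1\}$ we have $\mathbb{P}_{X \sim \lambda Q + (1-\lambda)R}[f_{\theta}(X)=1] = \lambda\, \mathbb{P}_{X\sim Q}[f_{\theta}(X)=1] + (1-\lambda)\,\mathbb{P}_{X\sim R}[f_{\theta}(X)=1]$. This is immediate from the definition of a mixture distribution (or, at the dataset level for which~\autoref{eq:mc_model} abuses notation, from simply counting what fraction of positively-classified points each constituent set contributes to the noisy group).

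For demographic parity I would introduce the shorthand $p_a := \mathbb{P}[f_{\theta}(X) = 1 \mid A = a]$ for $a \in \{0,1\}$, so that the true gap is $\Delta^{DP}(D_{A=0}\cup D_{A=1}) = p_1 - p_0$. Applying the linearity fact to the two rows of~\autoref{eq:mc_model}, the probability that the classifier outputs $1$ on the noisy majority group ${X}_{A=1,noisy}$ is $(1-\alpha)p_1 + \alpha p_0$, and on the noisy minority group ${X}_{A=0,noisy}$ it is $\beta p_1 + (1-\beta)p_0$. Subtracting and collecting the coefficients of $p_1$ and $p_0$ gives $[(1-\alpha) - \beta]\,p_1 + [\alpha - (1-\beta)]\,p_0 = (1-\alpha-\beta)(p_1 - p_0)$, which is exactly~\autoref{eq:dp_noise}. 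For equal opportunity the argument is identical after conditioning every quantity on $Y = 1$: by the label-dependent extension of the model, the $Y=1$ slice of each noisy group is a mixture of the $Y=1$ slices of the ground-truth groups with parameters $(\alpha_1, \beta_1)$, so writing $q_a := \mathbb{P}[f_{\theta}(X)=1 \mid A=a, Y=1]$ and repeating the same collection of terms yields $(1-\alpha_1-\beta_1)(q_1 - q_0)$, i.e.~\autoref{eq:eo_noise}.

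There is no genuinely hard step here; the result is essentially a restatement of mixture linearity. The two points that need care are (i) making precise what~\autoref{eq:mc_model}'s mixture notation means at the level of finite datasets versus distributions, so that the linearity step is literally a counting identity over the correct/incorrect sets and the union $D_{A=0,noisy}\cup D_{A=1,noisy}$ splits cleanly into its two conditional pieces; and (ii) being explicit that the EO computation must use the label-conditioned mixture weights $(\alpha_1,\beta_1)$ rather than $(\alpha,\beta)$ — this is the only place where the target-label-dependent extension of the noise model is actually invoked. I would also remark that the argument never uses the classifier $f_{\theta}$ beyond it being a fixed measurable map, so the same proportionality constant applies simultaneously to every candidate model evaluated on the noisy validation set, which is what makes the construction usable for ranking hyper-parameters.
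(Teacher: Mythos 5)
Your proof is correct, and the mixture-linearity computation you give is exactly the standard argument: the paper itself states this proposition without proof, citing prior work, and your derivation simply supplies that routine argument, including the one point that actually needs care — using the label-conditioned parameters $(\alpha_{1},\beta_{1})$ rather than $(\alpha,\beta)$ in the equal-opportunity case. Nothing further is needed.
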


~\autoref{eq:dp_noise} and~\autoref{eq:eo_noise} show that under the ideal MC noise model, the DP and EO gaps can be minimized using PSA labels instead of ground-truth SA labels, although asymptotically with infinite validation data samples. In practice, we seek to minimize the total noise $\alpha+\beta$, or equivalently maximize the proportionality constant $1 - \alpha - \beta$ to obtain the most reliable fairness estimates. We show that this can be done by maximizing EDM. 

\begin{lemma}
\label{lem:edm_max}
Assume ${X}_{A=0,\text{PSA}}$ and $X_{A=1,\text{PSA}}$ correspond to the input data of noisy datasets in the ideal MC model. Then, maximizing the EDM between ${X}_{A=0,\text{PSA}}$ and $X_{A=1,\text{PSA}}$, i.e.,  $\| \mu({X}_{A=0,\text{PSA}}) - \mu(X_{A=1,\text{PSA}}) \|_{2}$  maximizes $1-\alpha-\beta$.
\end{lemma}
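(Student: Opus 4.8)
The plan is to exploit linearity of the mean under mixtures: by~\autoref{eq:mc_model}, the mean of each noisy group is the convex combination of the ground-truth group means with the \emph{same} mixing weights. Concretely, I would first write $\mu({X}_{A=1,noisy}) = (1-\alpha)\,\mu({X}_{A=1}) + \alpha\,\mu({X}_{A=0})$ and $\mu({X}_{A=0,noisy}) = \beta\,\mu({X}_{A=1}) + (1-\beta)\,\mu({X}_{A=0})$; for the empirical means this is just bookkeeping over which samples land in which bucket, and in the population limit it is linearity of expectation over the mixture. Subtracting, the cross terms collapse and I obtain $\mu({X}_{A=0,noisy}) - \mu({X}_{A=1,noisy}) = (1-\alpha-\beta)\big(\mu({X}_{A=0}) - \mu({X}_{A=1})\big)$, hence $EDM = \|\mu({X}_{A=0,noisy}) - \mu({X}_{A=1,noisy})\|_{2} = |1-\alpha-\beta|\cdot\|\mu({X}_{A=0}) - \mu({X}_{A=1})\|_{2}$.

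The second step is to observe that $\|\mu({X}_{A=0}) - \mu({X}_{A=1})\|_{2}$ is a fixed quantity: it depends only on the ground-truth group-conditional distributions and not at all on the choice of noisy labeller (equivalently, not on $\gamma$ nor on the induced $(\alpha,\beta)$). Therefore maximizing $EDM$ over labellers is the same as maximizing $|1-\alpha-\beta|$.

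The last step removes the absolute value. I would invoke the standard non-degeneracy assumption of the MC noise model, $\alpha+\beta \le 1$ --- the same regime under which the proportionality constant in the preceding Proposition is meaningful (at $\alpha+\beta=1$ the two noisy distributions coincide and every fairness gap estimated from them is $0$; $\alpha+\beta>1$ is a relabelled regime one excludes). Under $\alpha+\beta\le 1$ we have $|1-\alpha-\beta| = 1-\alpha-\beta$, so maximizing $EDM$ maximizes $1-\alpha-\beta$, as claimed; replacing $(\alpha,\beta)$ by $(\alpha_{1},\beta_{1})$ gives the target-label-dependent version verbatim.

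There is no real obstacle here --- the argument is a one-line computation followed by an appeal to an assumption already baked into the MC framework. The only points I would be careful to state explicitly are (i) the restriction $\alpha+\beta\le 1$, which is an inherited modeling assumption rather than something to be proved, and (ii) that the statement is vacuous when $\mu({X}_{A=0}) = \mu({X}_{A=1})$, since then $EDM \equiv 0$ independent of the noise and no mean-distance criterion can rank labellers; implicitly the lemma presumes the two ground-truth group means differ.
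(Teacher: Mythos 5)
Your proposal is correct and follows essentially the same route as the paper's proof: expand the noisy group means via linearity under the MC mixture, observe that the difference collapses to a multiple of the fixed ground-truth mean difference, and conclude. In fact your algebra is more careful than the paper's, which writes the proportionality factor as $(1-\alpha-\beta)^{2}$ for the unsquared norm (correct only for the squared distance) and silently drops the absolute value, whereas you correctly obtain $|1-\alpha-\beta|$ and explicitly invoke the non-degeneracy condition $\alpha+\beta\le 1$ (and the non-vacuousness condition $\mu(X_{A=0})\neq\mu(X_{A=1})$) needed to conclude that maximizing EDM maximizes $1-\alpha-\beta$.
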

\begin{proof}
From~\autoref{eq:mc_model}, we can see that 
$\|\mu({X}_{A=0,\text{PSA}}) - \mu(X_{A=1,\text{PSA}}) \|_{2} = (1-\alpha-\beta)^{2} \| \mu({X}_{A=0}) - \mu(X_{A=1}) \|_{2}.$
Here $\| \mu({X}_{A=0}) - \mu(X_{A=1}) \|_{2}$ is the EDM between the ground truth advantaged and disadvantaged data and is therefore a constant. Hence, maximizing EDM between ${X}_{A=0,\text{PSA}}$ and $X_{A=1,\text{PSA}}$ maximizes $1-\alpha-\beta$. 
\end{proof}

\begin{remark} The MC noise model assumes \emph{independent} label noise. However, when using ERM classifiers to generate PSAs, this noise can be instance dependent. Although we use the simplified MC noise model to inform our practical implementation, we do not claim that \sys inherits the MC model's theoretical guarantees. In ~\autoref{tab:ideal_mc_model_jtt}, we do show empirically that the gap between fairness achieved with \sys and under ideal MC noise is small.  
\end{remark}

\section{Experimental Setup}
\label{sec:exp_setup}

\subsection{Baseline Methods}
We evaluate \sys with state-of-the-art fairness methods that work without SA on training data: JTT~\citep{jtt}, GEORGE~\cite{george}, ARL~\cite{arl}\tmlr{, and AFR~\citep{afr}}. GEORGE and ARL additionally do not require SA on validation data. Below, we describe these baselines.

\paragraph{JTT:} JTT operates in two stages. In the first stage, a biased model is trained using $T$ epochs of standard ERM training to identify the incorrectly classified training examples.
In the second stage, the misclassified examples are upsampled $\lambda$ times, and the model is trained again to completion with standard ERM. The hyperparameters of stage 1 and stage 2 classifiers, including early stopping epoch $T$, learning rate and weight decay for stage 1 and upsampling factor $\lambda$ for stage 2, are jointly tuned using a validation dataset with ground-truth SA labels. We refer to this as the \textbf{Ground-Truth + JTT} baseline. 

\paragraph{GEORGE:} GEORGE is a competing approach to \sys in that it does not assume access to SA on either training or validation data. GEORGE operates in two stages: In stage 1, an ERM model is trained until completion on the ground-truth target labels. The activation in the penultimate layer of the ERM model are clustered into $k$ clusters to generate PSA labels on both the training and validation datasets. In Stage 2, these PSA are used to train a Group DRO model~\cite{gdro} and tune its early stopping hyper-parameter.

\paragraph{ARL:}\akv{Adversarially Reweighted Learning (ARL) seeks to improve the worst-group performance without access to SA on either training or validation datasets. ARL framework considers a mini-max game between a learner and adversary. The goal of the learner is to output fair predictions by minimizing the weighted cross entropy classification loss function. Whereas, the adversary maximizes the weighted cross entropy loss so as to identify high loss training data points and upweight them during the training of the learner. Since ARL does not assume access to SA on validation set, it tries to maximize the target label accuracy by performing a grid search over the joint hyper-parameter space of both learner and adversary.}

\tmlr{\paragraph{AFR:} Automatic Feature Reweighting is a lightweight framework that operates in two stages. In the first stage, a standard ERM model is trained on 80\% of the training dataset to minimize the ERM objective. In the second stage, AFR retrains only the last layer of the ERM model using the remaining 20\% of the training dataset with a weighted loss function. The weights of each sample, $i$, are determined by:} 

\begin{equation}
\tmlr{w_{i} = \frac{\kappa_{y_{i}}\text{exp}(-\gamma\hat{p_{i}})}{\Sigma_{j=1}^{M} \kappa_{y_{j}}\text{exp}(-\gamma\hat{p_{j}})}}
\end{equation}

\tmlr{These weights emphasize the samples where the standard ERM model from stage 1 underperforms. The hyper-parameter, $\gamma$, is used to control how much to up-weight examples with poor ground-truth target label prediction probability ($\hat{p}$), ensuring that examples from the disadvantaged group are given greater importance in stage 2. Additionally, AFR balances the target class labels using $\kappa_{y}$, which is the reciprocal of the number of examples belonging to class $y$ in the re-training set. In stage 2, AFR also uses a regularization term to prevent the last layer from focusing only on minority examples, potentially at the cost of degrading performance on majority group examples to an unacceptable level. The strength of this regularization is controlled by the hyper-parameter $\tau$. The hyper-parameters of the second stage are tuned using a validation dataset with ground-truth SA labels. We refer to this as the \textbf{Ground-Truth + AFR} baseline.}

\subsection{\sys+ Baseline}
\paragraph{Antigone+JTT:} Here, we replace the ground-truth SA in the validation dataset with PSA obtained from \sys and use it to tune JTT's hyper-parameters.

\paragraph{Antigone+GEORGE:} For a fair comparison with GEORGE, we replace its stage 1 with \sys, and use the resulting validation PSA labels to tune the hyper-parameters of GEORGE's stage 2. 

\paragraph{Antigone+ARL:}\akv{Instead of tuning the target label accuracy on the validation set, we use Antigone's validation PSA labels to tune the hyper-parameters of ARL. 
We also tune the hyper-parameters of ARL with ground-truth SA labels and refer to it as \textbf{Ground-Truth+ARL}.}

\tmlr{\paragraph{Antigone+AFR:} Here, our primary goal is to improve WGA, as AFR seeks to mitigate spurious correlations by improving WGA with the assumption of having access to ground-truth SA on the validation dataset. We substitute the ground-truth SA in the validation dataset with PSA acquired from Antigone and use it to tune AFR's hyperparameters.}

\subsection{Datasets and Parameter Settings}
\akv{We empirically evaluate \sys on the CelebA and Waterbirds datasets, which allow for a direct comparison with related work~\citep{jtt,george}. We also evaluate \sys on UCI Adult Dataset, a tabular dataset commonly used in the fairness literature to directly compare with ARL~\cite{arl} (see Appendix~\ref{app_subsec: exp_setup}) for more details.}

\noindent \textbf{CelebA Dataset:} CelebA~\citep{celeba_dataset} is an image dataset, consisting of 202,599 celebrity face images annotated with 40 attributes including gender, hair colour, age, smiling, etc. The task is to predict hair color, which is either blond $Y = 1$ or non-blond $Y = 0$ and the sensitive attribute is gender $A = \{\text{Men}, \text{Women}\}$. 
In all our experiments using CelebA dataset, we fine-tune a pre-trained ResNet50 architecture for a total of 50 epochs using SGD optimizer and a batch size of 128. 
We tune JTT over the same hyper-parameters as in their paper: three pairs of learning rates and weight decays, ${(1e-04, 1e-04), (1e-04, 1e-02), (1e-05, 1e-01)}$ for both stages, and over ten early stopping points up to $T=50$  and $\lambda \in \{20, 50, 100\}$ for stage 2.
For \sys, we explore over the same learning rate and weight decay values, as well as early stopping at any of the 50 training epochs. 
We report results for DP, EO and WGA fairness metrics. In each case, we seek to optimize fairness while constraining average target label accuracy to ranges $\{[90,91), [91,92), [92,93), [93,94), [94,95)\}$. 

\noindent \textbf{Waterbirds Dataset:} Waterbirds is a synthetically generated dataset, containing 11,788 images of water and land birds overlaid on top of either water or land backgrounds~\citep{gdro}. The task is to predict the bird type, which is either a waterbird $Y = 1$ or a landbird $Y = 0$ and the sensitive attribute is the background $A = \{\text{Water background}, \text{Land background}\}$. 
In all our experiments using Waterbirds dataset, we fine-tune ResNet50 architecture for a total of 300 epoch using the SGD optimizer and a batch size of 64. 
We tune JTT over the same hyper-parameters as in their paper: three pairs of learning rates and weight decays,  ${(1e-03, 1e-04), (1e-04, 1e-01), (1e-05, 1.0)}$ for both stages, and over 14 early stopping points up to $T = 300$ and $\lambda \in \{20, 50, 100\}$ for stage 2. 
For \sys, we explore over the same learning rate and weight decay values, as well as early stopping points at any of the 300 training epochs. In each case, we seek to optimize fairness while constraining average accuracy to ranges $\{[94,94.5), [94.5,95), [95,95.5), [95.5,96), [96,96.5)\}$. 

\akv{In case of GEORGE, for both CelebA and Waterbirds datasets, we use the same architecture and early stopping stage 2 hyper-parameters ($T=50$ for CelebA and $T=300$ for Waterbirds) reported in their paper. For Antigone+GEORGE, we replace GEORGE's stage 1 with the \sys model, which is identified by searching over the same hyperparameter space as in Antigone+JTT.}

\tmlr{In case of AFR, for both CelebA and Waterbirds, we fine-tune a pre-trained ResNet50 architecture and tune its hyper-parameters to minimize the cross-entropy loss function in stage 1. In stage 2, we re-train the last layer by searching over the hyper-parameters space reported in their paper. Specifically, 
for CelebA, we tune over early stopping stage 2 epochs ($T = 1000$), $\gamma$ from 10 points linearly spaced between [1, 3], learning rate $= 2e-2$, and $\tau \in \{0.001, 0.01, 0.1\}$. For Waterbirds, we tune over early stopping stage 2 epochs ($T = 500$), $\gamma$ from 33 points linearly spaced between [4, 20], learning rate $= 1e-2$, and $\tau \in \{0, 0.1, 0.2, 0.3, 0.4\}$. For Antigone+AFR, we tune AFR's stage 2 hyper-parameters using \sys's PSA, which are identified by searching over the same hyperparameter space as in Antigone+JTT.}

\section{Experimental Results}
\label{sec: results}

\noindent \textbf{Accuracy of \sys's PSA labels:} 
\sys seeks to generate accurate PSA labels on validation data, referred to as \textit{pseudo label accuracy}, based on the EDM criterion (Lemma~\ref{lem:edm_max}). 
In Figure~\ref{fig:waterbirds-edm}, we empirically validate Lemma~\ref{lem:edm_max} by plotting EDM and noise parameters $\alpha_{1}$ (contamination in advantaged group), 
$\beta_{1}$ (contamination in disadvantaged group) and $1- \alpha_{1} -\beta_{1}$ (proportionality constant between true and estimated fairness) 
on Waterbirds dataset (similar plot for CelebA dataset is in Appendix Figure~\ref{subfig:celeba-edm}). 
From the figure, we observe that in both cases the EDM metric indeed captures the trend in $1- \alpha_{1} -\beta_{1}$, enabling early stopping at an epoch that minimizes contamination. The best early stopping points based on EDM and oracular knowledge of $1- \alpha_{1} -\beta_{1}$ are shown in a blue dot and star, respectively, and are very close. 

\begin{figure}[htbp]%
\centering
\includegraphics[width=0.4\textwidth]{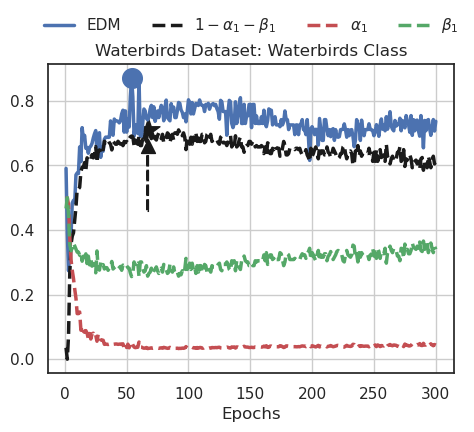}
  \caption{Euclidean Distance between Means (EDM) and noise parameters ($\alpha_{1}, \beta_{1}$ and $1-\alpha_{1}-\beta_{1}$) for the positive target class of Waterbirds dataset. Blue dot indicates the model picked by \sys, while black star indicates the model that maximizes $1-\alpha_{1}-\beta_{1}$.} 
  \label{fig:waterbirds-edm}
\vspace{-2em}
\end{figure}

In ~\autoref{tab:f1_accuracy} we compare \sys's PSA labels' F1 Score to GEORGE with the baseline $k=5$ and with $k=2$ clusters on CelebA and Waterbirds.
We find that \sys outperforms GEORGE on all but one sub-group in CelebA, and all Waterbirds.
~\autoref{tab:f1_accuracy} also reports results on  
a version of \sys that uses standard ERM training instead of EDM (\sys (w/o EDM)). We find that \sys provides higher pseudo-label accuracy compared to this baseline.
Appendix~\autoref{tab:precision_recall} shows precision and recall of \sys's PSA labels and reaches the same conclusion. 
We also study the impact of 
varying the fraction of disadvantaged group individuals from {5\%, 20\%, 35\%, 50\%} in the CelebA dataset (see Appendix~\autoref{tab:varying_imbalance}). As the dataset gets more balanced, the models themselves are more fairer, and PSA label accuracy reduces (as expected). Nonetheless, disadvantaged group individuals are over-represented in incorrect sets for up to 35\% imbalance.

\begin{table}[htbp]
\centering
\caption{F1 scores and \textit{pseudo label accuracies} (Ps. Acc.) 
We mark the best performance in bold. BM (blond men), BW (blond women), NBW (non-blond women) and NBM (non-blond men) for CelebA; WL (waterbirds landbkgd), WW (waterbirds waterbkgd), LW (landbirds waterbkgd) and LL (landbirds landbkgd) for Waterbirds.}
\label{tab:f1_accuracy}
\resizebox{0.58\textwidth}{!}{
\begin{tabular}{ccccc}
\toprule
 & \sys & GEORGE & GEORGE & \sys \\
 & (w/o EDM) &  & ($k = 2$) & (w/ EDM) \\
 \midrule
 & \multicolumn{4}{c}{CelebA (F1 Scores)} \\
 \midrule
BM & 0.28\tiny{$\pm$ 0.01} & 0.13\tiny{$\pm$ 0.02} & 0.12 \tiny{$\pm$ 0.01} & \textbf{0.35\tiny{$\pm$ 0.04}} \\
BW & 0.95\tiny{$\pm$ 0.01} & 0.43\tiny{$\pm$ 0.04} & 0.51\tiny{$\pm$ 0.02} & \textbf{0.96\tiny{$\pm$ 0.00}} \\
NBW & 0.22\tiny{$\pm$ 0.02} & 0.42\tiny{$\pm$ 0.01} & \textbf{0.6\tiny{$\pm$ 0.01}} & 0.22\tiny{$\pm$ 0.01} \\
NBM & 0.67\tiny{$\pm$ 0.01} & 0.4\tiny{$\pm$ 0.02} & 0.31\tiny{$\pm$ 0.01} & \textbf{0.68\tiny{$\pm$ 0.01}} \\
\midrule
Ps. Acc. & 0.59\tiny{$\pm$ 0.01} & 0.33\tiny{$\pm$ 0.01} & 0.48\tiny{$\pm$ 0.00} & \textbf{0.60\tiny{$\pm$ 0.00}} \\
\midrule
 & \multicolumn{4}{c}{Waterbirds (F1 Scores)} \\
 \midrule
WL & 0.41\tiny{$\pm$ 0.02} & 0.43\tiny{$\pm$ 0.02} & 0.52\tiny{$\pm$ 0.01} & \textbf{0.76\tiny{$\pm$ 0.03}} \\
WW & 0.72\tiny{$\pm$ 0.00} & 0.36\tiny{$\pm$ 0.02} & 0.43\tiny{$\pm$ 0.02} & \textbf{0.83\tiny{$\pm$ 0.01}} \\
LW & 0.58\tiny{$\pm$ 0.02} & 0.44\tiny{$\pm$ 0.03} & 0.55\tiny{$\pm$ 0.03} & \textbf{0.78\tiny{$\pm$ 0.04}} \\
LL & 0.76\tiny{$\pm$ 0.01} & 0.34\tiny{$\pm$ 0.02} & 0.55\tiny{$\pm$ 0.03} & \textbf{0.84\tiny{$\pm$ 0.02}} \\
\midrule
Ps. Acc. & 0.68\tiny{$\pm$ 0.01} & 0.30\tiny{$\pm$ 0.02} & 0.53\tiny{$\pm$ 0.03} & \textbf{0.81\tiny{$\pm$ 0.02}} \\
\bottomrule
\end{tabular}
}
\end{table}

\noindent \textbf{Antigone+JTT:}  
In~\autoref{tab:celeba-jtt}, we compare the test target label accuracy and fairness achieved by \sys with JTT (Antigone+JTT) vs. JTT using ground-truth SA (Ground-Truth+JTT). 
On DP and EO, Antigone+JTT is very close to Ground-Truth+JTT in terms of both target label accuracy and fairness, and substantially improves on standard ERM. 
Antigone+JTT improves WGA from 38.7\% for standard ERM to 68.1\% at the expense of $3\%$ target label accuracy drop. Ground-Truth+JTT improves WGA further up to 78.6\% but with a 4.4\% target label accuracy drop.
Waterbirds (Appendix~\autoref{tab:waterbirds-jtt}) and UCI Adult (Appendix~\autoref{tab:adult-jtt}) have the same trends.

\begin{table*}[htbp]
\centering
\caption{(Avg. target label accuracy, Fairness) on test data for different validation accuracy thresholds on the CelebA dataset. Lower DP and EO gaps are better. Higher WGA is better.}\label{tab:celeba-jtt}
\resizebox{0.9\textwidth}{!}{%
\begin{tabular}{ccccc}
\toprule
Val. Thresh. & Method & DP Gap & EO Gap & Worst-group Acc. \\
 \midrule
 \multirow{2}{*}{{[}94, 95)} & \sys $+$ JTT & (94.6, 15.0)\tiny{$\pm$ (0.2, 0.7)} & (94.7, 30.1)\tiny{$\pm$ (0.2, 3.2)} & (94.4, 59)\tiny{$\pm$ (0.2, 4.7)} \\
 & Ground-Truth $+$ JTT & (94.7, 14.9)\tiny{$\pm$ (0.2, 0.6)} & (94.5, 30.4)\tiny{$\pm$ (0.2, 2.3)} & (94.3, 62.1)\tiny{$\pm$ (0.3, 3.2)} \\
 \midrule
 \multirow{2}{*}{{[}93, 94)} & \sys $+$ JTT & (93.7, 13.1)\tiny{$\pm$ (0.2, 0.7)} & (93.6, 26.4)\tiny{$\pm$ (0.4, 5.0)} & (93.4, 62.6)\tiny{$\pm$ (0.2, 7.0)} \\
 & Ground-Truth $+$ JTT & (93.6, 13.1)\tiny{$\pm$ (0.1, 0.6)} & (93.6, 22.7)\tiny{$\pm$ (0.3, 2.7)} & (93.4, 67.9)\tiny{$\pm$ (0.1, 1.9)} \\
 \midrule
 \multirow{2}{*}{{[}92, 93)} & \sys $+$ JTT & (92.7, 11.1)\tiny{$\pm$ (0.2, 0.5)} & (92.3, 20.2)\tiny{$\pm$ (0.2, 3.4)} & (92.7, 68.1)\tiny{$\pm$ (0.4, 3.7)} \\
 & Ground-Truth $+$ JTT & (92.7, 11.2)\tiny{$\pm$ (0.3, 0.5)} & (92.7, 16.9)\tiny{$\pm$ (0.4, 2.9)} & (92.7, 72.5)\tiny{$\pm$ (0.2, 1.3)} \\
 \midrule
 \multirow{2}{*}{{[}91, 92)} & \sys $+$ JTT & (91.7, 9.6)\tiny{$\pm$ (0.1, 0.5)} & (91.5, 16.3)\tiny{$\pm$ (0.3, 3.4)} & (91.3, 63.2)\tiny{$\pm$ (0.3, 2.6)} \\
 & Ground-Truth $+$ JTT & (91.8, 9.7)\tiny{$\pm$ (0.2, 0.5)} & (91.8, 10.1)\tiny{$\pm$ (0.3, 4.1)} & (91.8, 77.3)\tiny{$\pm$ (0.1, 2.4)} \\
 \midrule
 \multirow{2}{*}{{[}90, 91)} & \sys $+$ JTT & (91.0, 8.3)\tiny{$\pm$ (0.2, 0.4)} & (90.9, 13.1)\tiny{$\pm$ (0.1, 3.6)} & (90.9, 63.1)\tiny{$\pm$ (0.5, 4.4)} \\
 & Ground-Truth $+$ JTT & (91.0, 8.4)\tiny{$\pm$ (0.2, 0.4)} & (90.7, 6.8)\tiny{$\pm$ (0.4, 3.7)} & (91.4, 78.6)\tiny{$\pm$ (0.2, 2.0)} \\
 \midrule
 & ERM & (95.8, 18.6)\tiny{$\pm$ (0.0, 0.3)} & (95.8, 46.4)\tiny{$\pm$ (0.0, 2.2)} & (95.8, 38.7)\tiny{$\pm$ (0.0, 2.8)} \\
 \bottomrule
\end{tabular}%
}
\end{table*}

\noindent \textbf{Comparison with GEORGE:}  
As already noted in~\autoref{tab:f1_accuracy}, \sys's PSA labels are more accurate and have higher F1 scores than GEORGE's. 
\akv{In~\autoref{tab:george}, Antigone+GEORGE shows marked improvements over GEORGE in both WGA (8.6\% higher) and {target label accuracy} on Waterbirds.
For CelebA, Antigone+GEORGE has a 4.2
\% 
higher WGA but with a small drop of 0.4\% in {target label accuracy}. Fairness improvement are statistically significant under paired t-tests; over 10 runs, Antigone+GEORGE always equals or betters GEORGE in terms of WGA (Appendix~\autoref{fig:celeba_george_raw}).}

\begin{table}[htbp]
\centering
\caption{\akv{Performance of GEORGE using \sys's validation PSA compared with GEORGE by itself. We observe that on CelebA and Waterbirds dataset, \sys$+$ GEORGE out-performs GEORGE, even if GEORGE assumes knowledge of number of clusters $(k=2)$ in its clustering step. $*$ ($**$) indicates $p-\text{value} < 0.01 (0.001)$.}}
\label{tab:george}
\resizebox{0.6\textwidth}{!}{%
\begin{tabular}{ccccc}
\toprule
& \multicolumn{2}{c}{CelebA} & \multicolumn{2}{c}{Waterbirds} \\
\cmidrule(lr){2-3}  \cmidrule(lr){4-5}
Method & Avg Acc & WGA & Avg Acc & WGA\\
\cmidrule(lr){1-1} \cmidrule(lr){2-2}  \cmidrule(lr){3-3} \cmidrule(lr){4-4} \cmidrule(lr){5-5}
ERM & \textbf{95.75} & 35.14 & 95.91 & 29.70\\
\cmidrule(lr){1-1} \cmidrule(lr){2-2}  \cmidrule(lr){3-3} \cmidrule(lr){4-4} \cmidrule(lr){5-5}
GEORGE & 93.61 & 60.44 & 95.39 & 49.15\\
\sys + GEORGE & 93.56 & 62.45$^{*}$ & \textbf{95.99}$^{**}$ & \textbf{57.73}$^{**}$\\
\cmidrule(lr){1-1} \cmidrule(lr){2-2}  \cmidrule(lr){3-3} \cmidrule(lr){4-4} \cmidrule(lr){5-5}
GEORGE (k=2) & 94.62 & 60.75 & 94.61 & 42.98\\
\sys + & \multirow{2}{*}{94.18} & \multirow{2}{*}{\textbf{64.94}$^{**}$} & \multirow{2}{*}{95.56$^{**}$} & \multirow{2}{*}{52.80$^{*}$}\\
GEORGE (k=2) & & & & \\
\bottomrule
\end{tabular}
}
\end{table}

\akv{\noindent \textbf{Comparison with ARL:} In~\autoref{tab:arl_uci_main} (full table in Appendix~\autoref{tab:arl_uci}), we compare the {target label accuracies} and fairness of Antigone+ARL vs. just ARL
for different validation accuracy thresholds on the Adult UCI. 
Antigone+ARL has 4.5\%-11.6\% higher WGA than ARL alone, but with a small ($<$ 0.6\%) drop in target label accuracy. Similar observations hold on DP Gap and EO Gap.}

\begin{table}[htbp]
\centering
\caption{\akv{Comparison of (Avg. target label accuracy, Fairness) between ARL using Antigone's noisy validation data, ARL alone, and ground-truth validation data on the UCI Adult dataset. Lower DP and EO gaps indicate better fairness, while higher WGA is better. Antigone + ARL consistently outperforms ARL across various validation accuracy thresholds and fairness metrics. The $p$-values are marked with $^{*}$ accordingly: $^{*}$ for $p < 0.1$, $^{**}$ for $p < 0.05$, and $^{***}$ for $p < 0.01$.}}
\label{tab:arl_uci_main}
\resizebox{\textwidth}{!}{%
\begin{tabular}{ccccc}
\toprule
Val. Thresh. & Method & DP Gap & EO Gap & Worst-group Acc. \\
 \midrule
\multirow{3}{*}{{[}84.5, 85)} & Antigone + ARL & \textbf{(84.51, 16.84)$^{**}$ \tiny{$\pm$ (0.13, 1.08)}} & \textbf{(84.1, 5.64)$^{***}$ \tiny{$\pm$ (0.08, 1.43)}} & \textbf{(84.14, 59.68)$^{***}$ \tiny{$\pm$ (0.13, 1.27)}} \\
 & ARL & (84.53, 19.01) \tiny{$\pm$ (0.13, 0.97)} & (84.53, 9.35) \tiny{$\pm$ (0.13, 0.88)} & (84.53, 55.08) \tiny{$\pm$ (0.13, 3.03)} \\
 & Ground-Truth + ARL & (84.5, 15.86) \tiny{$\pm$ (0.24, 1.02)} & (84.43, 4.97) \tiny{$\pm$ (0.17, 1.23)} & (84.08, 62.66) \tiny{$\pm$ (0.32, 0.71)} \\
 \midrule
\multirow{3}{*}{{[}84, 84.5)} & Antigone + ARL & \textbf{(84.12, 15.55)$^{***}$ \tiny{$\pm$ (0.21, 1.14)}} & \textbf{(83.7, 6.65)$^{**}$ \tiny{$\pm$ (0.22, 1.61)}} & \textbf{(83.47, 60.9)$^{**}$ \tiny{$\pm$ (0.22, 2.1)}} \\
 & ARL & (84.02, 18.35) \tiny{$\pm$ (0.17, 1.04)} & (84.02, 8.04) \tiny{$\pm$ (0.17, 1.21)} & (84.02, 55.69) \tiny{$\pm$ (0.17, 2.93)} \\
 & Ground-Truth + ARL & (84.23, 15.26) \tiny{$\pm$ (0.12, 0.9)} & (83.81, 5.57) \tiny{$\pm$ (0.12, 1.58)} & (83.62, 64.74) \tiny{$\pm$ (0.14, 0.65)} \\
 \midrule
\multirow{3}{*}{{[}83.5, 84)} & Antigone + ARL & \textbf{(83.29, 15.13)$^{***}$ \tiny{$\pm$ (0.21, 1.17)}} & \textbf{(83.07, 5.36)$^{**}$ \tiny{$\pm$ (0.22, 2.86)}} & \textbf{(82.93, 62.17)$^{***}$ \tiny{$\pm$ (0.13, 1.63)}} \\
 & ARL & (83.54, 19.43) \tiny{$\pm$ (0.12, 1.17)} & (83.54, 10.52) \tiny{$\pm$ (0.12, 2.16)} & (83.54, 53.72) \tiny{$\pm$ (0.12, 2.66)} \\
 & Ground-Truth + ARL & (83.53, 14.7) \tiny{$\pm$ (0.19, 0.87)} & (83.17, 4.34) \tiny{$\pm$ (0.17, 0.69)} & (83.31, 66.61) \tiny{$\pm$ (0.24, 1.6)} \\
 \midrule
\multirow{3}{*}{{[}83, 83.5)} & Antigone + ARL & \textbf{(82.98, 15.05)$^{***}$ \tiny{$\pm$ (0.18, 1.45)}} & \textbf{(82.69, 5.4)$^{*}$ \tiny{$\pm$ (0.16, 3.15)}} & \textbf{(82.55, 65.85)$^{***}$ \tiny{$\pm$ (0.31, 0.58)}} \\
 & ARL & (83.2, 18.24) \tiny{$\pm$ (0.13, 2.45)} & (83.2, 8.69) \tiny{$\pm$ (0.13, 3.68)} & (83.2, 54.21) \tiny{$\pm$ (0.13, 4.99)} \\
 & Ground-Truth + ARL & (82.86, 14.84) \tiny{$\pm$ (0.13, 1.26)} & (83.04, 6.03) \tiny{$\pm$ (0.13, 1.56)} & (82.47, 66.75) \tiny{$\pm$ (0.16, 1.56)} \\
 \midrule
 & ERM & (84.69, 18.27) \tiny{$\pm$ (0.08, 0.5)} & (84.69, 9.39) \tiny{$\pm$ (0.08, 1.11)} & (84.69, 53.36) \tiny{$\pm$ (0.08, 1.97)} \\
 \bottomrule
\end{tabular}%
}
\vspace{-1em}
\end{table}

\tmlr{\noindent \textbf{Comparison with AFR:} In ~\autoref{tab:afr}, we compare the test target label accuracies and WGA achieved by Antigone with AFR (Antigone+AFR) vs. AFR using ground-truth SA (Ground-Truth+AFR). Antigone + AFR is very close to Ground-Truth + AFR in terms of both target label accuracy and WGA, and substantially improves on standard ERM. On CelebA, Antigone + AFR improves WGA from 41\% for standard ERM to 81\% at the expense of 5\% target label accuracy drop. Ground-Truth + AFR improves WGA further up to 82\% with a 4\% target label accuracy drop. Similar observations hold for the Waterbirds dataset also.}

\begin{table}[htbp]
\centering
\caption{\tmlr{Performance of AFR using \sys's validation PSA compared with AFR by itself. Higher WGA is better. We observe that \sys$+$ AFR is close to the performance of Ground-Truth + AFR.}}
\label{tab:afr}
\resizebox{0.6\textwidth}{!}{%
\begin{tabular}{ccccc}
\toprule
& \multicolumn{2}{c}{CelebA} & \multicolumn{2}{c}{Waterbirds} \\
\cmidrule(lr){2-3}  \cmidrule(lr){4-5}
Method & Avg Acc & WGA & Avg Acc & WGA\\
\cmidrule(lr){1-1} \cmidrule(lr){2-2}  \cmidrule(lr){3-3} \cmidrule(lr){4-4} \cmidrule(lr){5-5}
ERM & 0.96 \tiny{$\pm$ 0.01} & 0.41 \tiny{$\pm$ 0.01} & 0.98 \tiny{$\pm$ 0.02} & 0.64 \tiny{$\pm$ 0.01}\\
\cmidrule(lr){1-1} \cmidrule(lr){2-2}  \cmidrule(lr){3-3} \cmidrule(lr){4-4} \cmidrule(lr){5-5}
\sys + AFR & 0.91 \tiny{$\pm$ 0.01} & 0.81 \tiny{$\pm$ 0.01} & 0.92 \tiny{$\pm$ 0.04} & 0.82 \tiny{$\pm$ 0.02}\\
Ground-Truth + AFR & 0.92 \tiny{$\pm$ 0.00} & 0.82 \tiny{$\pm$ 0.01} & 0.93 \tiny{$\pm$ 0.03} & 0.84 \tiny{$\pm$ 0.02} \\
\bottomrule
\end{tabular}
}
\vspace{-1em}
\end{table}

\paragraph{Ablation Studies:}
We perform two ablation experiments to understand the benefits of \sys's proposed EDM metric. We already noted in~\autoref{tab:f1_accuracy} that \sys with the proposed EDM metric produces higher quality PSA labels compared to a version of \sys that picks hyper-parameters using standard ERM. We evaluated these two approaches using JTT's training algorithm and find that \sys with EDM results in a 5.7\% increase in WGA and a small 0.06\% increase in average target label accuracy. Second, in Appendix~\autoref{tab:ideal_mc_model_jtt}, we also compare Antigone+JTT against a synthetically labeled validation dataset that exactly follows the ideal MC noise model in Section ~\ref{sec:edm_under_mn_noise}. 
We find that on DP Gap and EO Gap fairness metrics, \sys’s results are comparable (in fact sometimes slightly better) with those derived from the ideal MC model. On WGA, the most challenging fairness metric to optimize for, we find that the ideal MC model has a best-case WGA of 73.9\% compared to Antigone’s 66.7\%. This reflects the loss in fairness due to the gap between the assumptions of the idealized model versus \sys's implementation; however, the reduction in fairness is marginal when compared to the ERM baseline which has only a 38\% WGA.

\paragraph{Applicability to backbone models}
\tmlr{We conducted additional experiments to evaluate Antigone's performance using large pre-trained models. Specifically, we fine-tuned a pre-trained state-of-art large transformer model, ViT-B/16~\citep{vit}, on the Waterbirds dataset, resulting in target label accuracy of 99\% and WGA of 82\% for the ERM model (refer to Appendix~\ref{sec:vit} for more details). In Appendix~\autoref{tab:f1_accuracy_vit}, we compare Antigone-ViT-B/16’s PSA labels to GEORGE with k = 5 and k = 2 clusters. We find that Antigone-ViT-B/16 outperforms GEORGE. Furthermore, we compare the test target label accuracies and WGA achieved by Antigone-ViT-B/16 + AFR vs. Ground-Truth+AFR (Appendix~\autoref{tab:afr_complete_vit}). The results are consistent with prior observations and show that Antigone-ViT-B/16 can be successfully used with large pre-trained models.}

\section{Limitations}
\label{app:future}
\sys has some notable limitations that we discuss here, along with potential avenues 
to mitigate these concerns. First, in its current form, \sys only explicitly 
deals with \textit{binary} sensitive attributes. In practice, multiple subgroups could in fact
be over-represented in the incorrect set, and as such accounted for during hyperparameter tuning but \emph{not} explicitly. We note that downstream robustness methods, like JTT, that we demonstrate \sys in conjunction with and others 
like~\cite{eiil} have the same limitation.
Antigone+JTT can improve fairness for these sub-groups as a whole but cannot, for example, have different up-weighting factors for each subgroup. 
However, this limitation is not fundamental and can be addressed by 
further sub-dividing the incorrect set, or via multiple rounds  
of Antigone+JTT where in each round we address any remaining fairness gaps. 
We note also that although GEORGE addresses multiple ($k$) subgroups, 
tuning $k$ is challenging and results for larger $k$ are sometimes worse than $k=2$.  

A second concern is whether the assumptions of the MC noise model, independent label noise in particular, 
hold strictly. While they do not, we are not claiming the theoretical fairness guarantees of ~\cite{fairness_mc_noise}. \sys is a practical solution to improve fairness like other works we evaluate against.
In Appendix~\autoref{tab:ideal_mc_model_jtt}, we do compare Antigone+JTT against a synthetically labeled validation dataset that exactly follows the MC noise model and find that Antigone’s reduction in fairness is marginal. In~\autoref{fig:waterbirds-edm},  we also empirically validate that the proportionality constant $1 - \alpha - \beta$ minimizes the gap between the true and estimated fairness values.

\section{Related Work}
Methods that seek to achieve fairness are of three types: pre-processing, in-processing and post-processing algorithms. Pre-processing~\citep{preprocess1, preprocess2} methods focus on curating the dataset that includes removal of sensitive information or balancing the datasets. In-processing methods~\citep{inpro1, inpro2, inpro3, arl, mindiff, veldandafairness, george} alter the training mechanism by adding fairness constrains to the loss function or by training an adversarial framework to make predictions independent of sensitive attributes~\citep{adv1}. Post-processing methods~\citep{postpro1, postpro2, postpro3} alter the outputs, for \textit{e.g.} use different threshold for different sensitive attributes. In this work, we focus on \textit{in-processing} algorithms.  

Prior in-processing algorithms, including the ones referenced above, assume access to sensitive attributes on the training data and validation dataset. 
\akv{Recent work sought to train fair models without training data annotations~\citep{jtt, lff, cvardro, eiil, levy_dro, duchi_dro, ssa, cnc, self} but require sensitive attributes on validation dataset to tune the hyperparameters. With \sys, we seek to remove this restriction. Since JTT already compared against ~\cite{lff} and \cite{levy_dro}, a scalable version of ~\citep{duchi_dro}, we compare against JTT, although we believe \sys can be used with these methods also.} 

\tmlr{Recently, ~\cite{dfr} proposed a simple and light-weight framework to improve WGA. They demonstrated that re-training the last layer of an ERM model with a small, group-balanced dataset is sufficient for achieving robustness to spurious correlations. But, ground-truth SA's are required to balance the groups and tune-hyperparameters. AFR~\citep{afr} addresses this problem by retraining the last layer of an ERM model with a weighted loss that emphasizes the examples where the ERM model under-performs, automatically up-weighting the disadvantaged group without ground-truth SA labels. AFR showed that it can achieve the same target label accuracy and WGA as DFR, but AFR still requires ground-truth SA on validation dataset to tune-hyper-parameters. We show that using Antigone's PSA, we can get close to the performance of AFR without considering any access to ground-truth SA labels.}

GEORGE~\citep{george} and and ARL~\citep{arl} are two methods that like \sys do not require PSAs on validation data. Qualitative and empirical 
comparisons in~\autoref{sec: results} show that \sys outperforms both. 
ARL is demonstrated to be effective only on smaller datasets and on simple structured network architectures. Prior works~\citep{george, arl_limitations} have also noted ARL does not scale well to complex network architectures (e.g.: ResNet18) and  large vision datasets. On the other hand, both methods 
can account for multiple subgroups, although as we noted before, \sys still helps improve fairness for both. 
\tmlr{Another line of work makes implicit assumptions on ground-truth sensitive attributes.~\cite{extra2} is based on open-source proxy models. These proxy models are trained on distinct datasets to predict sensitive attributes. Subsequently, such off-the-shelf proxy models~\citep{meta} are employed to predict unknown sensitive attributes within the dataset.~\cite{extra1} presume that the proxy features are known a-priori, thereby gaining knowledge of the attributes on which the model discriminates. However, as noted in~\autoref{sec:intro}, we consider a more realistic setting where no such knowledge is given/known a-priori.} 
There are also some \emph{post-processing} methods to improve fairness without access to sensitive attributes but assuming a small set of labelled data for auditing~\cite{kim2019multiaccuracy}. One could use Antigone to create this auditing dataset, albeit with noise. Evaluating Antigone with these post-processing methods is an avenue for future work.s

Finally, a parallel body of work has looked at fairness with noisy sensitive attributes and incomplete information from a theoretical perspective using simplified but representative mathematical models~\citep{fairness_mc_noise, noise_fairness_theory2, noise_fairness_theory3, noise_fairness_theory4}, and sometimes with restrictions on the classifier types, etc. 
Yet these methods have largely not been translated to practical implementations on large datasets and state-of-art deep networks, which is \sys's end goal. \sys is one of the first methods to bring insights from this line of work to bear on practical implementations.
As a side note, we observe that Lemma~\ref{lem:edm_max} adds an extra result that might be of interest to this stream of research.

\section{Conclusion} 
We propose \sys, a method to enable hyper-parameter tuning for fair ML models without access to sensitive attributes on training or validation sets. 
\sys generates high-quality PSA labels 
by training a family of ERM models and using correctly (incorrectly) classified examples as proxies for majority (minority) group membership. We propose a hyperparameter free approach to pick the ERM models that obtains the most accurate PSA labels, and provide theoretical justification for this choice using the ideal MC noise model.
\sys produces more accurate sensitive attributes estimates compared to the state-of-art, and can be used to effectively tune hyperparameters of state-of-art fairness methods. 
\akv{Future work will also seek to address the variance in fairness metrics~\citep{fairness_bounds_1} introduced by finite sample size under the ideal MC noise model, extend \sys to non-binary sensitive attributes, \tmlr{and use methods such as Autotune~\citep{autotune} to automatically search over larger hyper-parameter spaces to maximize the EDM metric.}}

\newpage
\bibliography{main}
\bibliographystyle{tmlr}

\appendix
\paragraph{\large{Appendix}}

\section*{Availability}
Code with README.txt file is available at: 
\url{https://github.com/akshajkumarv/fairness_without_demographics}

\section{Experimental Setup}
\label{app_subsec: exp_setup}
\subsection{Compute}
\akv{We trained all the models employing the JTT approach using Quadro RTX8000 (48 GB) NVIDIA GPU cards, whereas, for both GEORGE and ARL approaches, we used GeForce RTX3090 (24 GB) NVIDIA GPU cards.}

\subsection{CelebA Dataset}
\noindent \textbf{Dataset details:} CelebA~\citep{celeba_dataset} is an image dataset, consisting of 202,599 celebrity face images annotated with 40 attributes including gender, hair colour, age, smiling, etc. The task is to predict hair color, which is either blond $Y = 1$ or non-blond $Y = 0$ and the sensitive attribute is gender $A = \{\text{Men}, \text{Women}\}$. The dataset is split into training, validation and test sets with 162770, 19867 and 19962 images, respectively. Only 15\% of individuals in the dataset are blond, and only 6\% of blond individuals are men. Consequently, the baseline ERM model under-performs on the blond men. 

\noindent \textbf{Hyper-parameter settings:} In all our experiments using CelebA dataset, we fine-tune a pre-trained ResNet50 architecture for a total of 50 epochs using SGD optimizer and a batch size of 128. 
We tune JTT over the same hyper-parameters as in their paper: three pairs of learning rates and weight decays, ${(1e-04, 1e-04), (1e-04, 1e-02), (1e-05, 1e-01)}$ for both stages, and over ten early stopping points up to $T=50$  and $\lambda \in \{20, 50, 100\}$ for stage 2.
For \sys, we explore over the same learning rate and weight decay values, as well as early stopping at any of the 50 training epochs. 
We report results for DP, EO and WGA fairness metrics. In each case, we seek to optimize fairness while constraining average target label accuracy to ranges $\{[90,91), [91,92), [92,93), [93,94), [94,95)\}$. 

\subsection{Waterbirds Dataset}
\noindent \textbf{Dataset details:} Waterbirds is a synthetically generated dataset, containing 11,788 images of water and land birds overlaid on top of either water or land backgrounds~\citep{gdro}. The task is to predict the bird type, which is either a waterbird $Y = 1$ or a landbird $Y = 0$ and the sensitive attribute is the background $A = \{\text{Water background}, \text{Land background}\}$. The dataset is split into training, validation and test sets with 4795, 1199 and 5794 images, respectively. While the validation and test sets are balanced within each target class, the training set contains a majority of waterbirds (landbirds) in water (land) backgrounds and a minority of waterbirds (landbirds) on land (water) backgrounds. Thus, the baseline ERM model under-performs on the minority group. 

\noindent \textbf{Hyper-parameter settings:} In all our experiments using Waterbirds dataset, we fine-trained ResNet50 architecture for a total of 300 epoch using the SGD optimizer and a batch size of 64. 
We tune JTT over the same hyper-parameters as in their paper: three pairs of learning rates and weight decays,  ${(1e-03, 1e-04), (1e-04, 1e-01), (1e-05, 1.0)}$ for both stages, and over 14 early stopping points up to $T = 300$ and $\lambda \in \{20, 50, 100\}$ for stage 2. 
For \sys, we explore over the same learning rate and weight decay values, as well as early stopping points at any of the 300 training epochs. In each case, we seek to optimize fairness while constraining average accuracy to ranges $\{[94,94.5), [94.5,95), [95,95.5), [95.5,96), [96,96.5)\}$. 

\akv{In case of GEORGE, for both CelebA and Waterbirds datasets, we use the same architecture and early stopping stage 2 hyper-parameters ($T=50$ for CelebA and $T=300$ for Waterbirds) reported in their paper. For Antigone+GEORGE, we replace GEORGE's stage 1 with the \sys model, which is identified by searching over the same hyper-parameter space as in Antigone+JTT. To establish statistical significance and determine if Antigone+GEORGE’s performance is significantly greater than GEORGE by itself, we conducted a paired statistical t-test. The null hypothesis (H0) states that the mean of Antigone+GEORGE is less than or equal to the mean of GEORGE, while the alternative hypothesis (H1) states that the mean of Antigone+GEORGE is greater than that of GEORGE.}

\subsection{UCI Adult Dataset}
\noindent \textbf{Dataset details:} Adult dataset~\citep{adult_dataset} is used to predict if an individual's annual income is $\leq 50K$ ($Y = 0$) or $>50K$ ($Y = 1$) based on several continuous and categorical attributes like the individual’s education level, age, gender, occupation, etc. The sensitive attribute is gender $A = \{\text{Men}, \text{Women}\}$~\cite{zemel2013learning}. The dataset consists of 45,000 instances and is split into training, validation and test sets with 21112, 9049 and 15060 instances, respectively. The dataset has twice as many men as women, and only $15\%$ of high income individuals are women. Consequently, the baseline ERM model under-performs on the minority group.

\noindent \textbf{Hyper-parameter settings:} In all our experiments using Adult dataset, we train a multi-layer neural network with one hidden layer consisting of 64 neurons. We train for a total of 100 epochs using the SGD optimizer and a batch size of 256. 
We tune \sys and JTT by performing grid search over learning rates $\in \{1e-03, 1e-04, 1e-05\}$ and weight decays $\in \{1e-01, 1e-03\}$. For JTT, we explore over $T \in \{1, 2, 5, 10, 15, 20, 30, 35, 40, 45, 50, 65, 80, 95\}$ and $\lambda \in \{5, 10, 20\}$. 
In each case, we seek to optimize fairness while constraining average accuracy to ranges $\{[82,82.5), [81.5,82), [81,81.5), [80.5,81), [80,80.5)\}$.

\akv{For ARL, we train for a total of 100 epochs. We choose the best learning rate and batch size by exploring all possible hyper-parameters for the learner and adversary in the hyper-parameter search space given by batch size $\in \{32, 64, 128, 256, 512\}$ and learning rate $\in \{0.001, 0.01, 0.1, 1, 2, 5\}$, as in their paper. The learner is a fully connected two layer feed-forward network with 64 and 32 hidden units in the hidden layers, with ReLU activation function. The adversary is a fully connected one layer feed-forward network with 32 hidden units in the single hidden layer, with ReLU activation function. For Antigone+ARL, we use Antigone's pseudo sensitive attribute labels, for hyper-parameter tuning, identified by searching over the same hyper-parameter space as in Antigone+JTT. We seek to optimize fairness while constraining average accuracy to ranges $\{[84.5,85), [84,84.5), [83.5,84), [83,83.5), [82.5,83), [82,82.5)\}$. A paired statistical t-test is used to establish statistical significance and determine if the fairness of Antigone+ARL is significantly greater than that of ARL alone. The null hypothesis (H0) states that the mean fairness of Antigone+ARL is less than or equal to that of ARL, while the alternative hypothesis (H1) states that the mean fairness of Antigone+ARL is greater.}

\section{Quality of \sys's Sensitive Attribute Labels}
\sys seeks to generate accurate PSA labels on validation data based on the EDM criterion (Lemma~\ref{lem:edm_max}). 
In Appendix Figure~\ref{subfig:celeba-edm}, we empirically validate Lemma~\ref{lem:edm_max} by plotting EDM and noise parameters $\alpha_{1}$ (contamination in advantaged group), 
$\beta_{1}$ (contamination in disadvantaged group) and $1- \alpha_{1} -\beta_{1}$ (proportionality constant between true and estimated fairness) on CelebA dataset. From the figure, we observe that the EDM metric indeed captures the trend in $1-\alpha_{1}-\beta_{1}$, enabling early stopping at an epoch that minimizes contamination. The best early stopping point that maximizes EDM also has a very high $1-\alpha_{1}-\beta_{1}$.

\begin{figure*}[htbp]%
\centering
  \subfigure[Dataset Details]
  {%
    \label{subfig:datasets}
    \includegraphics[width=0.58\linewidth]{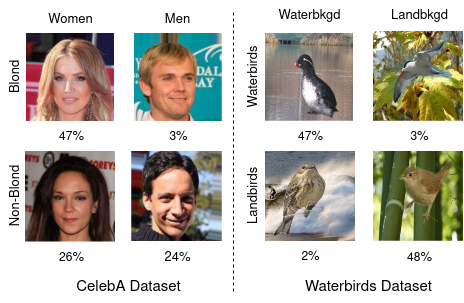}
  }%
  \subfigure[EDM and noise parameters on Celeba]
  {%
    \label{subfig:celeba-edm}
    \includegraphics[width=0.41\linewidth]{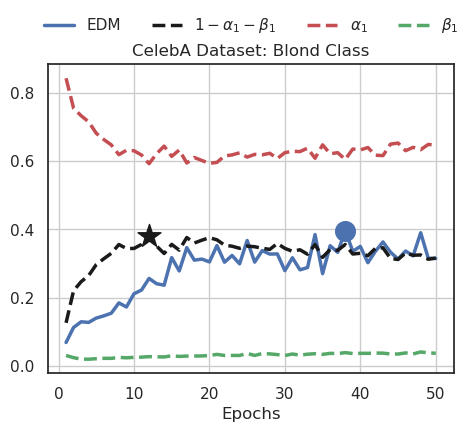}
  }%
  \caption{Figure (a) illustrates CelebA and Waterbirds datasets along with fraction of each sub-group examples in their respective training dataset. Figure (b) shows Euclidean Distance between Means (EDM) and noise parameters $\alpha_{1}, \beta_{1}$ and and $1-\alpha_{1}-\beta_{1}$ for the positive target class of CelebA dataset. The noise parameters are unknown in practice. Blue dot indicates the model that we pick to generate pseudo sensitive attributes, while black star indicates the model that maximizes $1-\alpha_{1}-\beta_{1}$.} 
\end{figure*}

\tmlr{Lemma~\ref{lem:edm_max} shows that the EDM metric is, in theory, proportional to $1-\alpha-\beta$. To further validate this lemma, we consider a larger hyper-parameter space, including learning rate, weight decay, and early stopping epochs, resulting in a total of 900 data points from our Waterbirds dataset. We compute the Pearson correlation coefficient between the EDM metric and $1-\alpha-\beta$ across these 900 different data points. The Pearson correlation coefficients are 0.90 and 0.95 for the waterbirds and landbirds classes, respectively, indicating a strong positive correlation between the EDM metric and $1-\alpha-\beta$. In Appendix Figure~\ref{fig:wb_sc}, we validate Lemma~\ref{lem:edm_max} by plotting the EDM metric vs. $1-\alpha-\beta$ for different hyper-parameter settings.}

\begin{figure*}[htbp]%
\centering
  \subfigure[Waterbirds Class]
  {%
    \label{subfig:waterbirds_class}
    \includegraphics[width=0.49\linewidth]{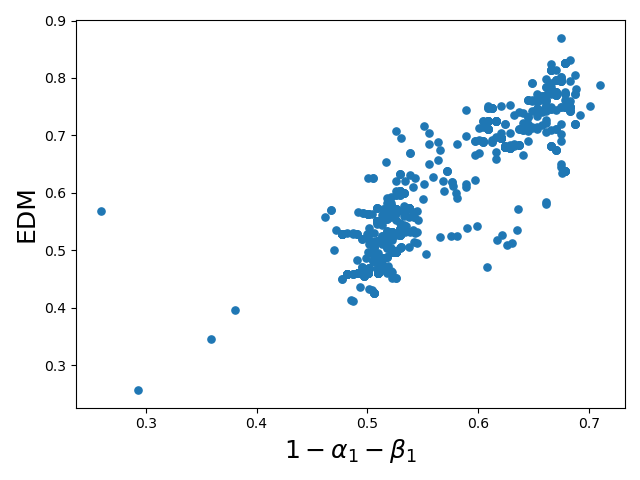}
  }%
  \subfigure[Landbirds Class]
  {%
    \label{subfig:landbirds_class}
    \includegraphics[width=0.49\linewidth]{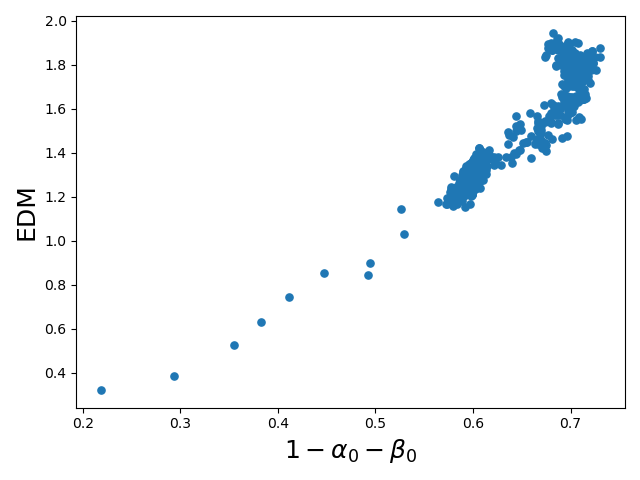}
  }%
  \caption{\tmlr{Figure illustrates a strong positive correlation between EDM and $1-\alpha-\beta$ with a Pearson correlation coefficient of 0.90 and 0.95 for the (a) waterbirds and (b) landbirds classes, respectively, on the Waterbirds dataset.}}
  \label{fig:wb_sc}
\end{figure*}

In Appendix~\autoref{tab:precision_recall}, we compare \sys's PSA label's precision, recall, and F1 scores to GEORGE with the baseline $k=5$ and with $k=2$ clusters on CelebA and Waterbirds.
We find that \sys outperforms GEORGE. We also study the impact of varying the fraction of disadvantaged group individuals from {5\%, 20\%, 35\%, 50\%} in the CelebA dataset (see Appendix~\autoref{tab:varying_imbalance}). As the dataset gets more balanced, the models themselves are more fairer, and PSA label accuracy reduces (as expected). Nonetheless, disadvantaged group individuals are over-represented in incorrect sets for up to 35\% imbalance.

\begin{table}[htbp]
\centering
\caption{We tabulate the precision, recall, F1-score of the noisy validation groups generated from ERM model, GEORGE, GEORGE with number of clusters $=2$ and \sys. We observe that \sys has higher precision and F1 scores across different noisy groups on CelebA and Waterbirds, respectively.}
\label{tab:precision_recall}
\resizebox{0.8\textwidth}{!}{%
\begin{tabular}{ccccc}
\toprule
 & \sys & GEORGE & GEORGE & \sys \\
 & (w/o EDM) &  & ($k = 2$) & (w/ EDM) \\
 \midrule
 & \multicolumn{4}{c}{CelebA (Precision, Recall, F1 Scores)} \\
 \midrule
\multirow{2}{*}{Blond Men} & 0.26, 0.31, 0.28 & 0.09, 0.32, 0.13 & 0.06, 0.70, 0.12 & 0.36, 0.34, 0.35 \\
 & (0.02, 0.03, 0.01) & (0.01, 0.09, 0.02) & (0.01, 0.05, 0.01) & (0.05, 0.04, 0.04) \\
\multirow{2}{*}{Blond Women} & 0.95, 0.94, 0.95 & 0.94, 0.28, 0.43 & 0.95, 0.35, 0.51 & 0.96, 0.96, 0.96 \\
 & (0.01, 0.01, 0.01) & (0.01, 0.04, 0.04) & (0.00, 0.02, 0.02) & (0.0, 0.0, 0.0) \\
\multirow{2}{*}{Non-blond Women} & 0.82, 0.13, 0.22 & 0.51, 0.36, 0.42 & 0.5, 0.76, 0.6 & 0.86, 0.13, 0.22 \\
 & (0.01, 0.01, 0.02) & (0.00, 0.01, 0.01) & (0.00, 0.01, 0.01) & (0.01, 0.01, 0.01) \\
\multirow{2}{*}{Non-blond Men} & 0.52, 0.97, 0.67 & 0.53, 0.33, 0.40 & 0.47, 0.23, 0.31 & 0.52, 0.98, 0.68 \\
 & (0.00, 0.00, 0.01) & (0.01, 0.02, 0.02) & (0.01, 0.01, 0.01) & (0.0, 0.0, 0.0) \\
CelebA Accuracy & 0.59 $\pm$ 0.01 & 0.33 $\pm$ 0.01 & 0.48 $\pm$ 0.00 & 0.60 $\pm$ 0.00 \\
 \midrule
 & \multicolumn{4}{c}{Waterbirds (Precision, Recall, F1 Scores)} \\
 \midrule
\multirow{2}{*}{Waterbirds Landbkgd} & 0.94, 0.26, 0.41 & 0.56, 0.34, 0.43 & 0.48, 0.57, 0.52 & 0.96, 0.63, 0.76 \\
 & (0.01, 0.02, 0.02) & (0.03, 0.02, 0.02) & (0.01, 0.01, 0.01) & (0.01, 0.04, 0.03) \\
\multirow{2}{*}{Waterbirds Waterbkgd} & 0.57, 0.98, 0.72 & 0.55, 0.27, 0.36 & 0.48, 0.39, 0.43 & 0.73, 0.97, 0.83 \\
 & (0.01, 0.00, 0.00) & (0.07, 0.03, 0.02) & (0.02, 0.02, 0.02) & (0.02, 0.01, 0.01) \\
\multirow{2}{*}{Landbirds Waterbkgd} & 0.96, 0.42, 0.58 & 0.57, 0.36, 0.44 & 0.55, 0.55, 0.55 & 0.97, 0.65, 0.78 \\
 & (0.00, 0.03, 0.02) & (0.04, 0.03, 0.03) & (0.03, 0.03, 0.03) & (0.00, 0.05, 0.04) \\
\multirow{2}{*}{Landbirds Landbkgd} & 0.63, 0.98, 0.76 & 0.55, 0.24, 0.34 & 0.55, 0.56, 0.55 & 0.74, 0.98, 0.84 \\
 & (0.01, 0.00, 0.01) & (0.04, 0.04, 0.02) & (0.03, 0.04, 0.03) & (0.03, 0.00, 0.02) \\
Waterbirds Accuracy & 0.68 $\pm$ 0.01 & 0.30 $\pm$ 0.02 & 0.53 $\pm$ 0.03 & 0.81 $\pm$ 0.02 \\
\bottomrule
\end{tabular}%
}
\end{table}

\begin{table}[htbp]
\centering
\caption{We tabulate the precision, recall, F1-score, \textit{pseudo label accuracy} of the noisy validation groups generated by varying the fraction of minority group examples in each class of CelebA dataset. We observe that \sys has higher precision, recall, F1 score and \textit{pseudo label accuracy} if the imbalance is more in the training dataset.}
\label{tab:varying_imbalance}
\resizebox{0.8\textwidth}{!}{%
\begin{tabular}{ccccc}
\toprule
Fraction Minority & 5\% & 20\% & 35\% & 50\% \\
\multicolumn{1}{l}{} & \multicolumn{4}{c}{Precision, Recall, F1 Score} \\
\midrule
Blond Men (Minority) & 0.57, 0.40, 0.47 & 0.81, 0.17, 0.29 & 0.67, 0.15, 0.24 & 0.75, 0.14, 0.24 \\
Blond Women (Majority) & 0.97, 0.98, 0.98 & 0.83, 0.99, 0.90 & 0.68, 0.96, 0.79 & 0.53, 0.95, 0.68 \\
$1-\alpha_{1}-\beta_{1}$ & 0.54 & 0.64 & 0.35 & 0.28 \\
Blond Ps. Acc & 0.95 & 0.83 & 0.68 & 0.55 \\
\midrule
Non-blond Women (Minority) & 0.45, 0.26, 0.33 & 0.59, 0.17, 0.26 & 0.63, 0.18, 0.28 & 0.63, 0.16, 0.26 \\
Non-blond Men (Majority) & 0.96, 0.98, 0.97 & 0.82, 0.97, 0.89 & 0.68, 0.94, 0.79 & 0.52, 0.91, 0.66 \\
$1-\alpha_{0}-\beta_{0}$ & 0.41 & 0.41 & 0.31 & 0.15 \\
Non-blond Ps. Acc. & 0.94 & 0.81 & 0.67 & 0.54 \\
\midrule
Overall Ps. Acc. & 0.95 & 0.81 & 0.67 & 0.54 \\
\bottomrule
\end{tabular}%
}
\end{table}

\subsection{\tmlr{Antigone's Performance on Large Pre-trained Models}}
\label{sec:vit}
\tmlr{Antigone’s hyper-parameter search can include model architectures. We perform additional experiments by fine-tuning a pre-trained state-of-art large transformer model, ViT-B/16~\cite{vit}, on the Waterbirds dataset and show that Antigone-ViT-B/16 can generate high quality PSA labels and improve down-stream fairness. We seek to maximize the EDM metric by exhaustively performing a grid search over the hyper-parameter space which includes learning rate $\in \{1e-03, 1e-04, 1e-05\}$, weight decay $\in \{0, 1e-04, 1e-01, 1.0\}$, batch size $\in \{64, 512\}$, and gradient clipping $\in \{\text{False}, \text{True with threshold} = 1.0\}$.} 

\tmlr{In Appendix~\autoref{tab:f1_accuracy_vit}, we compare the quality of Antigone-ViT-B/16’s PSA labels’ F1 Score and PSA labels’ accuracy to GEORGE with the baseline k = 5 and with k = 2 clusters. We find that Antigone-ViT-B/16 outperforms GEORGE. The table also reports results on a version of Antigone-ViT-B/16 that uses standard ERM training instead of EDM (Antigone-ViT-B/16 (w/o EDM)). We find that Antigone-ViT-B/16 (w/ EDM) provides higher pseudo-label accuracy compared to this baseline. To validate Lemma~\ref{lem:edm_max}, we compute the Pearson correlation coefficient between the EDM metric and $1-\alpha-\beta$ over a larger hyper-parameter space containing 14,400 different data points from our Waterbirds datasets. The Pearson correlation coefficient is 0.88 and 0.97 for waterbirds and landbirds classes, respectively, indicating a strong positive correlation between the EDM metric and $1-\alpha-\beta$.}

\begin{table}[htbp]
\centering
\caption{\tmlr{F1 scores and \textit{pseudo label accuracies} (Ps. Acc.) of Antigone-ViT-B/16 and GEORGE on Waterbirds dataset. We mark the best performance in bold. WL (waterbirds landbkgd), WW (waterbirds waterbkgd), LW (landbirds waterbkgd) and LL (landbirds landbkgd) for Waterbirds dataset.}}
\label{tab:f1_accuracy_vit}
\resizebox{0.65\textwidth}{!}{
\begin{tabular}{ccccc}
\toprule
 & Antigone-ViT-B/16 & GEORGE & GEORGE & \sys-ViT-B/16  \\
 & (w/o EDM) & & ($k = 2$) & (w/ EDM) \\
 \midrule
 & \multicolumn{4}{c}{Waterbirds (F1 Scores)} \\
 \midrule
WL & 0.28\tiny{$\pm$ 0.01} &   0.43\tiny{$\pm$ 0.02} & 0.52\tiny{$\pm$ 0.01} & \textbf{0.71\tiny{$\pm$ 0.07}} \\
WW & 0.69\tiny{$\pm$ 0.01} &   0.36\tiny{$\pm$ 0.02} & 0.43\tiny{$\pm$ 0.02} & \textbf{0.67\tiny{$\pm$ 0.04}} \\
LW & 0.18\tiny{$\pm$ 0.05} &   0.44\tiny{$\pm$ 0.03} & 0.55\tiny{$\pm$ 0.03} & \textbf{0.64\tiny{$\pm$ 0.02}} \\
LL & 0.68\tiny{$\pm$ 0.01} &   0.34\tiny{$\pm$ 0.02} & 0.55\tiny{$\pm$ 0.03} & \textbf{0.77\tiny{$\pm$ 0.00}} \\
\midrule
Ps. Acc. & 0.57\tiny{$\pm$0.02} &  0.30\tiny{$\pm$ 0.02} & 0.53\tiny{$\pm$ 0.03} & \textbf{0.72\tiny{$\pm$ 0.00}} \\
\bottomrule
\end{tabular}
}
\end{table}

\tmlr{In Appendix~\autoref{tab:afr_vit}, we compare the test target label accuracies and WGA achieved by Antigone-ViT-B/16 + AFR vs. Ground-Truth+AFR, where we utilize ResNet-50 as the backbone model for AFR. Furthermore, in Appendix~\autoref{tab:afr_complete_vit}, we also compare the test target label accuracies and WGA achieved by Antigone-ViT-B/16 + AFR-ViT-B/16 vs. Ground-Truth+AFR-ViT-B/16, where we utilize ViT-B/16 as the backbone model for AFR-ViT-B/16. The results are consistent with prior observations and show that Antigone-ViT-B/16 can be successfully used with large pre-trained models.}

\begin{table}[htbp]
\centering
\caption{\tmlr{Performance of AFR using Antigone-ViT-B/16's validation PSA compared with Ground-Truth + AFR on Waterbirds dataset. Higher WGA is better. We observe that Antigone-ViT-B/16 $+$ AFR is close to the performance of Ground-Truth + AFR.}}
\label{tab:afr_vit}
\resizebox{0.4\textwidth}{!}{%
\begin{tabular}{ccc}
\toprule
& \multicolumn{2}{c}{Waterbirds} \\
\cmidrule(lr){2-3}
Method & Avg Acc & WGA\\
\cmidrule(lr){1-1} \cmidrule(lr){2-2}  \cmidrule(lr){3-3}
ERM & 0.98 \tiny{$\pm$ 0.02} & 0.64 \tiny{$\pm$ 0.01}\\
\cmidrule(lr){1-1} \cmidrule(lr){2-2}  \cmidrule(lr){3-3}
\sys-ViT-B/16 +  & \multirow{2}{*}{0.90 \tiny{$\pm$ 0.05}} & \multirow{2}{*}{0.83 \tiny{$\pm$ 0.02}}\\
AFR & & \\
Ground-Truth + AFR & 0.93 \tiny{$\pm$ 0.03} & 0.84 \tiny{$\pm$ 0.02} \\
\bottomrule
\end{tabular}
}
\end{table}

\begin{table}[htbp]
\centering
\caption{\tmlr{Performance of AFR-ViT-B/16 using Antigone-ViT-B/16's validation PSA compared with Ground-Truth + AFR-ViT-B/16 on Waterbirds dataset. Higher WGA is better. We observe that Antigone-ViT-B/16 $+$ AFR-ViT-B/16 is close to the performance of Ground-Truth + AFR-ViT-B/16.}}
\label{tab:afr_complete_vit}
\resizebox{0.4\textwidth}{!}{%
\begin{tabular}{ccc}
\toprule
& \multicolumn{2}{c}{Waterbirds} \\
\cmidrule(lr){2-3}
Method & Avg Acc & WGA\\
\cmidrule(lr){1-1} \cmidrule(lr){2-2}  \cmidrule(lr){3-3}
ERM & 0.99 \tiny{$\pm$ 0.00} & 0.82 \tiny{$\pm$ 0.01}\\
\cmidrule(lr){1-1} \cmidrule(lr){2-2}  \cmidrule(lr){3-3}
\sys-ViT-B/16 +  & \multirow{2}{*}{0.97 \tiny{$\pm$ 0.01}} & \multirow{2}{*}{0.93 \tiny{$\pm$ 0.01}}\\
AFR-ViT-B/16 & & \\
Ground-Truth + & \multirow{2}{*}{0.97 \tiny{$\pm$ 0.00}} & \multirow{2}{*}{0.93 \tiny{$\pm$ 0.01}}\\
AFR-ViT-B/16 & & \\
\bottomrule
\end{tabular}
}
\end{table}

\clearpage

\section{\sys + JTT}
Appendix~\autoref{tab:waterbirds-jtt} and Appendix~\autoref{tab:adult-jtt} show the performance of \sys in conjunction with JTT on Waterbirds and UCI Adult datasets, respectively. \akv{While we observed on the validation set that Ground-Truth+JTT consistently exhibits higher (or equal) fairness compared to Antigone+JTT, it is important to note that fairness measurements may differ on test sets due to potential distribution shifts~\cite{data_split}.}

\akv{To show that \sys is effective on multiple binary sensitive attributes, we evaluate the trained Antigone+JTT models with age as the sensitive attribute on the CelebA dataset. The baseline ERM model achieves target label accuracy of 95.83\% and worst-group accuracy (WGA) of 78.14\%. Antigone improves WGA to 86.23\% with a target label accuracy of 95.25\%; and improves WGA further to 91.22\% with target label accuracy of 93.53\%. Antigone similarly improves demographic parity gap (2.21\% to 1.81\%) and equal opportunity gap (4.88\% to 2.59\%) with target label accuracy $>$95\%.}

\begin{table}[htbp]
\centering
\caption{We report the (Test average target label accuracy, Test fairness metric) for different validation accuracy thresholds on Waterbirds dataset. We observe that \sys $+$ JTT (our noisy sensitive attributes) improves fairness over baseline ERM model and closes the gap with Ground-Truth $+$ JTT (ground-truth sensitive attributes).}\label{tab:waterbirds-jtt}
\resizebox{0.9\textwidth}{!}{%
\begin{tabular}{ccccc}
\toprule
Val. Thresh. & Method & DP Gap & EO Gap & Worst-group \\
\midrule 
\multirow{2}{*}{{[}96, 96.5)} & \sys $+$ JTT & (95.8, 3.9) $\pm$ (0.4, 0.4) & (96.2, 10.7) $\pm$ (0.3, 7.1) & (96.3, 83.0) $\pm$ (0.4, 1.3) \\
 & Ground-Truth $+$ JTT & (95.8, 3.9) $\pm$ (0.4, 0.4) & (96.0, 7.1) $\pm$ (0.3, 1.4) & (96.3, 83.0) $\pm$ (0.4, 1.3) \\
\midrule
\multirow{2}{*}{{[}95.5, 96)} & \sys $+$ JTT & (95.4, 2.8) $\pm$ (0.1, 0.1) & (96.0, 7.5) $\pm$ (0.2, 1.5) & (96.3, 83.2) $\pm$ (0.3, 0.6) \\
 & Ground-Truth $+$ JTT & (95.4, 2.9) $\pm$ (0.4, 1.1) & (95.6, 6.0) $\pm$ (0.3, 2.1) & (96.1, 83.5) $\pm$ (0.5, 0.8) \\
\midrule
\multirow{2}{*}{{[}95, 95.5)} & \sys $+$ JTT & (94.5, 1.5) $\pm$ (0.6, 0.6) & (94.7, 4.2) $\pm$ (0.9, 3.1) & (94.7, 85.9) $\pm$ (0.9, 1.4) \\
 & Ground-Truth $+$ JTT & (94.4, 1.7) $\pm$ (0.7, 0.7) & (94.3, 1.1) $\pm$ (0.5, 0.6) & (95.1, 86.8) $\pm$ (0.6, 1.1) \\
\midrule
\multirow{2}{*}{{[}94.5, 95)} & \sys $+$ JTT & (94.2, 0.4) $\pm$ (0.4, 0.4) & (93.8, 2.0) $\pm$ (0.5, 1.4) & (94.2, 86.7) $\pm$ (0.8, 1.8) \\
 & Ground-Truth $+$ JTT & (93.6, 0.6) $\pm$ (0.5, 0.5) & (93.8, 2.0) $\pm$ (0.5, 1.4) & (94.1, 88.2) $\pm$ (0.6, 0.7) \\
 \midrule
\multirow{2}{*}{{[}94.0, 94.5)} & \sys $+$ JTT & (93.0, 1.5) $\pm$ (0.6, 0.3) & (93.6, 4.8) $\pm$ (1.2, 3.0) & (93.7, 87.9) $\pm$ (0.5, 1.4) \\
 & Ground-Truth $+$ JTT & (93.1, 1.5) $\pm$ (0.3, 0.4) & (93.2, 4.0) $\pm$ (1.0, 2.1) & (93.8, 88.1) $\pm$ (0.7, 1.1) \\
\midrule
& ERM & (97.3, 21.3) $\pm$ (0.2, 1.1) & (97.3, 35.0) $\pm$ (0.2, 3.4) & (97.3, 59.1) $\pm$ (0.2, 3.8) \\
\bottomrule
\end{tabular}%
}
\end{table} 

\begin{table}[htbp]
\centering
\caption{(Avg. target label accuracy, Fairness) on test data for different validation accuracy thresholds on the UCI Adult dataset. Lower DP and EO gaps are better. Higher WGA is better.}
\label{tab:adult-jtt}
\resizebox{\textwidth}{!}{%
\begin{tabular}{ccccc}
\toprule
Val. Thresh. & Method & DP Gap & EO Gap & Worst-group Acc. \\
\midrule
\multirow{2}{*}{\textgreater{}=82 and \textless 82.5} & Antigone + JTT & (81.85, 11.76) $\pm$ (0.11, 3.53) & (81.46, 2.68) $\pm$ (0.24, 5.67) & (81.65, 54.58) $\pm$ (0.20, 0.87) \\
 & Ground-Truth + JTT & (81.83, 11.92) $\pm$ (0.18, 3.62) & (81.49, 3.14) $\pm$ (0.17, 4.97) & (81.65, 54.58) $\pm$ (0.16, 0.76) \\
 \midrule
\multirow{2}{*}{\textgreater{}=81.5 and \textless 82} & Antigone + JTT & (81.74, 11.72) $\pm$ (0.15, 3.48) & (81.65, 6.39) $\pm$ (0.18, 1.14) & (81.56, 56.32) $\pm$ (0.4, 1.22) \\
 & Ground-Truth + JTT & (81.57, 10.97) $\pm$ (0.24, 3.87) & (81.75, 6.1) $\pm$ (0.37, 1.85) & (81.52, 57.24) $\pm$ (0.34, 1.32) \\
 \midrule
\multirow{2}{*}{\textgreater{}=81 and \textless 81.5} & Antigone + JTT & (81.01, 9.11) $\pm$ (0.19, 3.67) & (81.14, 1.43) $\pm$ (0.24, 1.25) & (81.19, 54.46) $\pm$ (0.25, 1.64) \\
 & Ground-Truth + JTT & (81.05, 8.92) $\pm$ (0.25, 3.34) & (81.11, 2.63) $\pm$ (0.22, 1.50) & (81.07, 55.04) $\pm$ (0.34, 1.07) \\
 \midrule
\multirow{2}{*}{\textgreater{}=80.5 and \textless 81} & Antigone + JTT & (80.71, 8.36) $\pm$ (0.31, 2.54) & (80.57, 2.28) $\pm$ (0.35, 1.31) & (80.63, 56.3) $\pm$ (0.29, 1.81) \\
 & Ground-Truth + JTT & (80.41, 7.04) $\pm$ (0.4, 2.97) & (80.87, 5.93) $\pm$ (0.28, 1.52) & (80.53, 56.23) $\pm$ (0.23, 0.98) \\
 \midrule
\multirow{2}{*}{\textgreater{}=80 and \textless 80.5} & Antigone + JTT & (80.09, 7.54) $\pm$ (0.26, 2.18) & (80.19, 2.54) $\pm$ (0.32, 1.39) & (80.18, 57.67) $\pm$ (0.22, 1.73) \\
 & Ground-Truth + JTT & (80.09, 5.63) $\pm$ (0.39, 2.59) & (80.23, 6.84) $\pm$ (0.15, 1.85) & (79.88, 57.50) $\pm$ (0.44, 1.75) \\
 \midrule
\multicolumn{1}{l}{} & ERM & (84.82, 53.83) $\pm$ (0.09, 0.20) & (84.82, 9.70) $\pm$ (0.09, 0.75) & (84.82, 53.14) $\pm$ (0.09, 0.58) \\
\bottomrule
\end{tabular}%
}
\end{table}

\begin{table}[htbp]
\centering
\caption{\sys $+$ JTT vs Ideal MC Model $+$ JTT (Avg. target label accuracy, Fairness) comparison on test data for different validation accuracy thresholds on the CelebA dataset. Lower DP and EO gaps are better. Higher WGA is better.}
\label{tab:ideal_mc_model_jtt}
\resizebox{0.6\columnwidth}{!}{%
\begin{tabular}{ccccc}
\toprule
\multicolumn{1}{l}{} & \multicolumn{1}{l}{} & DP Gap & EO Gap & WGA \\
\midrule
\multirow{2}{*}{{[}94, 95)} & \sys $+$ JTT & (94.9, 14.7) & (94.6, 33.7) & (94.5, 61.7) \\
 & Ideal MC $+$ JTT & (94.9, 14.7) & (94.4, 34.1) & (94.4, 58.3) \\
 \midrule
\multirow{2}{*}{{[}93, 94)} & \sys $+$ JTT & (93.7, 12.2) & (93.9, 30.3) & (93.3, 60.0) \\
 & Ideal MC $+$ JTT & (93.7, 12.2) & (93.5, 26.3) & (93.7, 65.0) \\
 \midrule
\multirow{2}{*}{{[}92, 93)} & \sys $+$ JTT & (93.1, 12.1) & (92.4, 22.9) & (92.9, 65.6) \\
 & Ideal MC $+$ JTT & (93.1, 12.1) & (93.0, 22.7) & (93.2, 69.4) \\
 \midrule
\multirow{2}{*}{{[}91, 92)} & \sys $+$ JTT & (91.9, 9.3) & (91.1, 13.9) & (91.1, 66.7) \\
 & Ideal MC $+$ JTT & (91.9, 9.3) & (92.2, 19.1) & (91.8, 73.9) \\
 \midrule
\multirow{2}{*}{{[}90, 91)} & \sys $+$ JTT & (91.1, 7.9) & (91.1, 13.9) & (91.1, 66.7) \\
 & Ideal MC $+$ JTT & (90.9, 8) & (90.4, 18.9) & (91.4, 72.2) \\
\bottomrule
\end{tabular}%
}
\end{table}

\section{Comparison with GEORGE}
\akv{Appendix~\autoref{fig:celeba_george_raw} and Appendix~\autoref{fig:waterbirds_george_raw} shows the performance of both Antigone+GEORGE and GEORGE experiments across multiple runs on CelebA and Waterbirds datasets, respectively.}

\section{Comparison with ARL}
\akv{Appendix~\autoref{tab:arl_uci} shows the performance of Antigone+ARL on UCI Adult dataset.}

\begin{table}[htbp]
\centering
\caption{\akv{Comparison of (Avg. target label accuracy, Fairness) between ARL using Antigone's noisy validation data, ARL alone, and ground-truth validation data on the UCI Adult dataset. Lower DP and EO gaps indicate better fairness, while higher WGA is better. Antigone + ARL consistently outperforms ARL across various validation accuracy thresholds and fairness metrics (as shown in bold). The $p$-values are marked with $^{*}$ accordingly: $^{*}$ for $p < 0.1$, $^{**}$ for $p < 0.05$, $^{***}$ for $p < 0.01$, and $^{****}$ for $p < 0.001$.}}
\label{tab:arl_uci}
\resizebox{\textwidth}{!}{%
\begin{tabular}{ccccc}
\toprule
Val. Thresh. & Method & DP Gap & EO Gap & Worst-group Acc. \\
 \midrule
\multirow{3}{*}{{[}84.5, 85)} & Antigone + ARL & \textbf{(84.51, 16.84)$^{**}$ \tiny{$\pm$ (0.13, 1.08)}} & \textbf{(84.1, 5.64)$^{***}$ \tiny{$\pm$ (0.08, 1.43)}} & \textbf{(84.14, 59.68)$^{***}$ \tiny{$\pm$ (0.13, 1.27)}} \\
 & ARL & (84.53, 19.01) \tiny{$\pm$ (0.13, 0.97)} & (84.53, 9.35) \tiny{$\pm$ (0.13, 0.88)} & (84.53, 55.08) \tiny{$\pm$ (0.13, 3.03)} \\
 & Ground-Truth + ARL & (84.5, 15.86) \tiny{$\pm$ (0.24, 1.02)} & (84.43, 4.97) \tiny{$\pm$ (0.17, 1.23)} & (84.08, 62.66) \tiny{$\pm$ (0.32, 0.71)} \\
 \midrule
\multirow{3}{*}{{[}84, 84.5)} & Antigone + ARL & \textbf{(84.12, 15.55)$^{***}$ \tiny{$\pm$ (0.21, 1.14)}} & \textbf{(83.7, 6.65)$^{**}$ \tiny{$\pm$ (0.22, 1.61)}} & \textbf{(83.47, 60.9)$^{**}$ \tiny{$\pm$ (0.22, 2.1)}} \\
 & ARL & (84.02, 18.35) \tiny{$\pm$ (0.17, 1.04)} & (84.02, 8.04) \tiny{$\pm$ (0.17, 1.21)} & (84.02, 55.69) \tiny{$\pm$ (0.17, 2.93)} \\
 & Ground-Truth + ARL & (84.23, 15.26) \tiny{$\pm$ (0.12, 0.9)} & (83.81, 5.57) \tiny{$\pm$ (0.12, 1.58)} & (83.62, 64.74) \tiny{$\pm$ (0.14, 0.65)} \\
 \midrule
\multirow{3}{*}{{[}83.5, 84)} & Antigone + ARL & \textbf{(83.29, 15.13)$^{***}$ \tiny{$\pm$ (0.21, 1.17)}} & \textbf{(83.07, 5.36)$^{**}$ \tiny{$\pm$ (0.22, 2.86)}} & \textbf{(82.93, 62.17)$^{***}$ \tiny{$\pm$ (0.13, 1.63)}} \\
 & ARL & (83.54, 19.43) \tiny{$\pm$ (0.12, 1.17)} & (83.54, 10.52) \tiny{$\pm$ (0.12, 2.16)} & (83.54, 53.72) \tiny{$\pm$ (0.12, 2.66)} \\
 & Ground-Truth + ARL & (83.53, 14.7) \tiny{$\pm$ (0.19, 0.87)} & (83.17, 4.34) \tiny{$\pm$ (0.17, 0.69)} & (83.31, 66.61) \tiny{$\pm$ (0.24, 1.6)} \\
 \midrule
\multirow{3}{*}{{[}83, 83.5)} & Antigone + ARL & \textbf{(82.98, 15.05)$^{***}$ \tiny{$\pm$ (0.18, 1.45)}} & \textbf{(82.69, 5.4)$^{*}$ \tiny{$\pm$ (0.16, 3.15)}} & \textbf{(82.55, 65.85)$^{***}$ \tiny{$\pm$ (0.31, 0.58)}} \\
 & ARL & (83.2, 18.24) \tiny{$\pm$ (0.13, 2.45)} & (83.2, 8.69) \tiny{$\pm$ (0.13, 3.68)} & (83.2, 54.21) \tiny{$\pm$ (0.13, 4.99)} \\
 & Ground-Truth + ARL & (82.86, 14.84) \tiny{$\pm$ (0.13, 1.26)} & (83.04, 6.03) \tiny{$\pm$ (0.13, 1.56)} & (82.47, 66.75) \tiny{$\pm$ (0.16, 1.56)} \\
 \midrule
\multirow{3}{*}{{[}82.5, 83)} & Antigone + ARL & \textbf{(82.69, 14.16)$^{****}$ \tiny{$\pm$ (0.17, 1.42)}} & \textbf{(82.32, 8.23)$^{**}$ \tiny{$\pm$ (0.14, 2.84)}} & \textbf{(82.12, 66.93)$^{****}$ \tiny{$\pm$ (0.16, 1.46)}} \\
 & ARL & (82.71, 20.27) \tiny{$\pm$ (0.37, 0.62)} & (82.71, 10.32) \tiny{$\pm$ (0.37, 2.17)} & (82.71, 54.4) \tiny{$\pm$ (0.37, 3.28)} \\
 & Ground-Truth + ARL & (82.63, 13.79) \tiny{$\pm$ (0.18, 1.52)} & (82.21, 4.31) \tiny{$\pm$ (0.28, 3.04)} & (81.91, 69.08) \tiny{$\pm$ (0.21, 1.52)} \\
 \midrule
\multirow{3}{*}{{[}82, 82.5)} & Antigone + ARL & \textbf{(82.3, 13.63)$^{****}$ \tiny{$\pm$ (0.18, 0.69)}} & \textbf{(81.88, 8.76)$^{**}$ \tiny{$\pm$ (0.16, 1.46)}} & \textbf{(81.39, 66.61)$^{***}$ \tiny{$\pm$ (0.29, 1.97)}} \\
 & ARL & (82.16, 21.63) \tiny{$\pm$ (0.33, 1.69)} & (82.16, 13.55) \tiny{$\pm$ (0.33, 2.98)} & (82.16, 52.82) \tiny{$\pm$ (0.33, 5.58)} \\
 & Ground-Truth + ARL & (82.34, 13.21) \tiny{$\pm$ (0.25, 1.07)} & (81.97, 3.39) \tiny{$\pm$ (0.38, 2.71)} & (81.35, 69.98) \tiny{$\pm$ (0.23, 1.38)} \\
 \midrule
 & ERM & (84.69, 18.27) \tiny{$\pm$ (0.08, 0.5)} & (84.69, 9.39) \tiny{$\pm$ (0.08, 1.11)} & (84.69, 53.36) \tiny{$\pm$ (0.08, 1.97)} \\
 \bottomrule
\end{tabular}%
}
\end{table}

\begin{figure*}[htbp]%
\centering
  \includegraphics[width=0.6\textwidth]{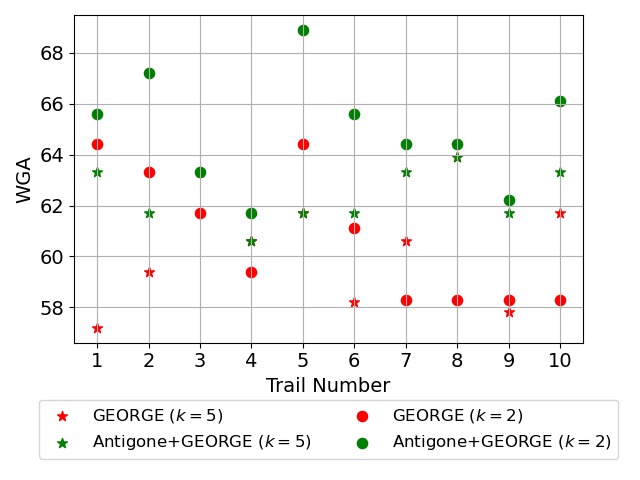}
  \caption{\akv{Figure illustrates the performance of Antigone+GEORGE and GEORGE in terms of target label accuracy and WGA across multiple trials for both $k=5$ and $k=2$ on the CelebA dataset.}}
  \label{fig:celeba_george_raw}
\end{figure*}

\begin{figure*}[htbp]%
\centering
  \includegraphics[width=0.6\textwidth]{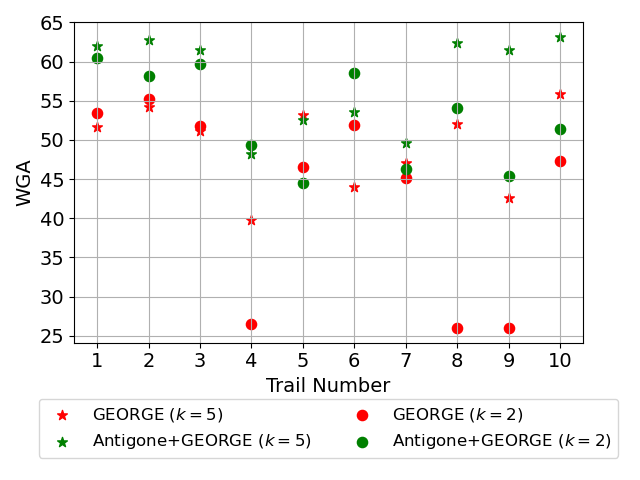}
  \caption{\akv{Figure illustrates the performance of Antigone+GEORGE and GEORGE in terms of target label accuracy and WGA across multiple trials for both $k=5$ and $k=2$ on the Waterbirds dataset.}}
  \label{fig:waterbirds_george_raw}
\end{figure*}

\end{document}